\newtheorem{definition}{Definition}
\newtheorem{lemma}{Lemma}
\newtheorem{theorem}{Theorem}
\newtheorem{proposition}{Proposition}
\newtheorem{corollary}{Corollary}
\tikzset{
  pico/.style = {
    every node/.style = {
      draw,
      circle,
      semithick,
      inner sep = 0pt,
      minimum width = 0.7ex,
      fill = white
    },
    semithick
  },
    edge/.style = {
    semithick
  },
  arc/.style = {
    ->,
    semithick,
    >={[round,sep]Stealth}
  },
  bidi/.style = {
    <->,
    dashed,
    semithick,
    >={[round,sep]Stealth}
  }
}
\patchcmd\algocf@Vline{\vrule}{\vrule \kern-0.4pt}{}{}
\patchcmd\algocf@Vsline{\vrule}{\vrule \kern-0.4pt}{}{}
\definecolor{ba.yellow}{RGB}{252,190,18}
\definecolor{ba.gray}{RGB}{153,153,156}
\definecolor{ba.blue}{RGB}{6,123,164}
\definecolor{ba.red}{RGB}{213,96,98}
\definecolor{ba.orange}{RGB}{233,116,81}
\definecolor{ba.pine}{RGB}{67,154,134}
\definecolor{ba.green}{RGB}{0, 168, 107}
\definecolor{ba.lightgreen}{RGB}{196,247,161}
\definecolor{ba.violet}{RGB}{88, 53, 94}
\newcommand{\bA}{{\bf A}}
\newcommand{\bB}{{\bf B}}
\newcommand{\bC}{{\bf C}}
\newcommand{\bE}{{\bf E}}
\newcommand{\bI}{{\bf I}}
\newcommand{\bN}{{\bf N}}
\newcommand{\bR}{{\bf R}}
\newcommand{\bS}{{\bf S}}
\newcommand{\bV}{{\bf V}}
\newcommand{\bW}{{\bf W}}
\newcommand{\bY}{{\bf Y}}
\newcommand{\bX}{{\bf X}}
\newcommand{\bZ}{{\bf Z}}
\newcommand{\by}{{\bf y}}
\newcommand{\bx}{{\bf x}}
\newcommand{\bz}{{\bf z}}
\def\bZii{\bZ_{(\mathrm{ii})}}
\newcommand{\Pa}{\textit{\textbf{Pa}}} 
\newcommand{\Ch}{\textit{\textbf{Ch}}} 
\newcommand{\Ne}{\textit{\textbf{Ne}}} 
\newcommand{\An}{\textit{\textbf{An}}} 
\newcommand{\De}{\textit{\textbf{De}}}
\newcommand*{\indep}{%
  \mathbin{%
    \mathpalette{\@indep}{}%
  }%
}
\newcommand*{\nindep}{%
  \mathbin{
    \mathpalette{\@indep}{/}%
  }%
}
\newcommand*{\@indep}[2]{%
  \sbox0{$#1\perp\m@th$}
  \sbox2{$#1=$}
  \sbox4{$#1\vcenter{}$}
  \rlap{\copy0}
  \dimen@=\dimexpr\ht2-\ht4-.2pt\relax
  \kern\dimen@
  \ifx\\#2\\%
  \else
    \hbox to \wd2{\hss$#1#2\m@th$\hss}%
    \kern-\wd2 %
  \fi
  \kern\dimen@
  \copy0 
}
\title{Linear-Time Algorithms for Front-Door Adjustment in Causal Graphs\thanks{Extended
  version of paper accepted to the Proceedings of the 38th AAAI
Conference on Artificial Intelligence (AAAI-24).}}
\author{
    Marcel Wienöbst, Benito van der Zander, Maciej Li\'{s}kiewicz \\
}
\begin{document}

\maketitle\vspace*{-9.51mm}
\thispagestyle{plain}

\begin{abstract}
Causal effect estimation from observational data is a fundamental task
in empirical sciences. It becomes particularly challenging when
unobserved confounders are involved in a system. This paper focuses on
front-door adjustment -- a classic technique which, using observed
mediators allows to identify causal effects even in the presence of
unobserved confounding. While the statistical properties of the
front-door estimation are quite well understood, its algorithmic
aspects remained unexplored for a long time. In 2022, Jeong, Tian, and Bareinboim 
presented the first polynomial-time
algorithm for finding sets satisfying the front-door criterion in a
given directed acyclic graph (DAG), with an $O(n^3(n+m))$ run time,
where $n$ denotes the number of variables and $m$ the number of edges
of the causal graph. In our work, we give the first linear-time, i.e.,
$O(n+m)$, algorithm for this task, which thus reaches the
asymptotically optimal time complexity. This result implies an
$O(n(n+m))$ delay enumeration algorithm of all front-door adjustment
sets, again improving previous work by a factor of
$n^3$. Moreover, we provide the first linear-time algorithm for
finding a \emph{minimal} front-door adjustment set. We offer implementations of our algorithms in multiple programming languages to facilitate practical usage and empirically validate their feasibility, even for large graphs.
\end{abstract}

\def\mmid{ | }

\section{Introduction}
Discovering and understanding causal relationships and distinguishing 
them from purely statistical associations
is a fundamental objective of empirical 
sciences. For example, recognizing the causes of diseases and other health 
problems is a central task in medical research enabling novel disease
prevention and treatment strategies. 
One possible approach for establishing causal relationships and analyzing causal effects is through 
Randomized Controlled Trials \citep{fisher1936design}, which are
considered the \emph{gold standard of experimentation}.  In practice, however,  experimentation is not 
always possible due to costs, technical
feasibility, or ethical constraints
-- e.g., participants of medical studies should not be assigned to smoke over
extended periods of time to ascertain its harmfulness.

The goal of \emph{causal inference} is to determine cause-effect relationships 
by combining observed and interventional data with existing knowledge. In this paper, 
we focus on the problem of deciding when causal effects can be identified from 
a graphical model \emph{and} observed data and, if possible, how to estimate the 
strength of the effect.  The model is typically represented as a directed acyclic graph 
(DAG), whose edges encode direct causal influences between the random variables 
of interest. To analyze the causal effects in such models,
\citet{pearl1995causal,Pearl2009} introduced
the do-operator which performs a hypothetical 
intervention forcing exposure (treatment) variables $\bX$ to take some values $\bx$. 
This allows to regard the (total) \emph{causal effect} of $\bX$ on outcome variables $\bY$, 
denoted as $P(\by \mmid  \textit{do}(\bx))$, as the probability distribution 
of variables $\bY$ after the intervention.\footnote{Following
  convention, for a random variable $X$, 
  we use $P(x)$  as a shorthand for $P(X=x)$. 
By bold capital letters $\bX,\bY$, etc., we denote sets of variables, and the corresponding 
sets of values are denoted by bold lowercase letters $\bx,\by$, etc.}
The fundamental task in causal inference is to decide whether 
$P(\by \mmid  \textit{do}(\bx))$ can be expressed using only \emph{standard} 
(i.e., do-operator free) probabilities and it becomes challenging when unobserved 
confounders (variables affecting both the treatment and outcome)
are involved in a system.
A variable is considered unobserved if it cannot be measured by the researcher. 
Fig.~\ref{fig:FD:examples} shows DAGs of some models with unobserved 
confounders. 

\begin{figure}
  \begin{center}
    \begin{tikzpicture}
      \node (x) at (0,0) {$X$};
      \node (z) at (1,0) {$Z$};
      \node (y) at (2,0) {$Y$};
      \node (u) at (1,0.7) {$U$};
      \node (l) at (-0.2,0.6) {(i)};
      \graph[use existing nodes, edges = {arc}] {
        x -- z;
        z -- y;
        u -- [dashed, bend right=25] (x);
        u -- [dashed, bend left=25] (y);
      };
    \end{tikzpicture} \hspace*{0.5cm}
    \begin{tikzpicture}
     \node (x) at (0,0) {$X$};
     \node (a) at (1,0) {$A$};
     \node (b) at (2,0.5) {$B$};
     \node (c) at (2,0) {$C$};
     \node (d) at (3,0) {$D$};
     \node (y) at (4,0) {$Y$};
     \node (u) at (1.7,0.8) {$U$};
     \node (l) at (0.0,0.6) {(ii)};
     \graph[use existing nodes, edges = {arc}] {
       x -- a -- b -- d -- y;
       a -- c -- d;
       u -- [dashed, bend right=11] (x);
       u -- [dashed, bend left=20] (y);
     };
    \end{tikzpicture}
    
    \vspace*{0.5cm}

    \begin{tikzpicture}
      \node (x) at (0,0) {$X$};
      \node (a) at (1,0) {$A$};
      \node (y) at (2,0) {$Y$};
      \node (u) at (1,0.7) {$U$};
      \node (d) at (4.0,0.0) {$D$};
      \node (c) at (3,0) {$C$};
      \node (b) at (3,0.7) {$B$};
      \node (l) at (-0.3,0.6) {(iii)};
      \graph[use existing nodes, edges = {arc}] {
        x -- a;
        a -- y;
        b -- y;
        c -- y;
        d -- b;
        d -- c;
        u -- [dashed, bend right=25] (x);
        u -- [dashed, bend left=25] (y);
      };
    \end{tikzpicture} 
  \end{center}
  \caption{Causal graphs, where $X$ is the treatment,
  $Y$ the outcome, and 
  $U$ represents an unobserved confounder.
  Graph (i) is a canonical example, with the (unique) set
  $\{Z\}$ satisfying the front-door (FD) criterion relative to $(X,Y)$.
  For graph (ii),  there exist 13 FD sets;
  Both the algorithm of \citeauthor{jeong2022finding} 
  as well as our basic Algorithm~\ref{alg:finding}, output $\bZ=\{A,B,C,D\}$ of maximum size.  
  In contrast, Algorithm~\ref{alg:finding-minimal} computes minimal
  FD set $\{D\}$
  of size 1.
   Graph (iii) illustrates the non-monotonicity of the FD 
  criterion: while  both $\{A,B,C\}$ and $\{A\}$ are FD sets,
  neither  $\{A,B\}$ nor  $\{A,C\}$ nor  $\{B,C\}$ satisfy the FD criterion.
}
\label{fig:FD:examples}
\end{figure}
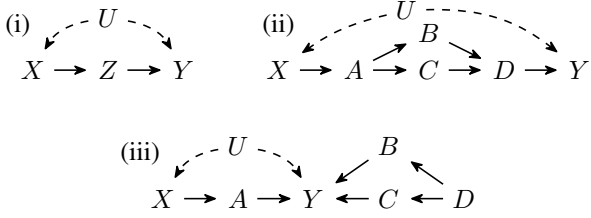

It is well known that the IDC algorithm by \citet{ShpitserIDCAlgorithm}, based on 
the prominent do-calculus \citep{pearl1995causal},
allows solving the identifiability problem in a sound and complete way \citep{huang2006pearl,shpitser2006identification}.
As such, researchers could potentially apply IDC to decide identifiability. 
However, two drawbacks affect the widespread use of the algorithm: Firstly 
it runs in polynomial time of high degree which precludes computations for 
graphs involving a reasonable amount of variables.
Secondly, the IDC algorithm computes complex expressions, 
even in case of small DAGs for which a simple formula exists (for
details, see, e.g., 
\citep{van2019separators}). 
Hence, in practice, total causal effects are estimated using other methods.

One of the most popular approaches  is to utilize \emph{covariate adjustment} of the form 
$P(\by \mmid  \textit{do}(\bx)) = \sum_\bz P(\by \mmid \bx, \bz) P(\bz)$, which 
is valid if $\bZ$ satisfies the famous back-door (BD) criterion \citep{pearl1995causal}.
Apart from convenient statistical properties, the BD based methods
rely on efficient, 
sound, and complete algorithms for covariate adjustments in DAGs. 
In particular,
utilizing the generalized BD criterion by \citet{ShpitserVR2010},  
the algorithmic framework provided by \citet{van2014constructing}
allows to find an adjustment set in linear-time $O(n+m)$ 
and to enumerate all covariate adjustment sets with \emph{delay}
$O(n(n+m))$, i.e., at most $O(n(n+m))$ time passes between successive
outputs.\footnote{By $n$, we denote the number of variables/vertices, by $m$
the number of edges in the causal graph.}

However, BD based approaches are unable to identify the causal effect
in many 
cases involving unobserved 
confounders, which are commonplace in practice.
For example,  none of the instances  in Fig.~\ref{fig:FD:examples}
can be identified via covariate adjustment, 
although   $P(y \mmid \textit{do}(x))$ can be expressed 
by the formula~\eqref{eq:fd:adjustment} below.
This  illustrates the use of  another classic technique,  
known as  front-door (FD) adjustment  \citep{pearl1995causal}, 
which is the main focus of this paper.
The advantage of this approach, as seen in the example, 
is that it leverages observed mediators to identify 
causal effects even in the presence of unobserved confounding.
In the general case, if a set of variables $\bZ$ satisfies the FD 
criterion\footnote{For a definition of the front-door criterion, see
Sec.~\ref{sec:prel}. We term sets satisfying this criterion \emph{FD
sets}.}
relative to $(\bX, \bY)$ in a DAG $G$, the variables $\bZ$ are observed and 
$P(\bx,\bz)>0$, then the 
effect of $\bX$ on $\bY$ is identifiable and is given by the formula
\begin{equation}
 	\label{eq:fd:adjustment}
         P(\by\mmid \textit{do}(\bx)) = \sum_{\bz} P(\bz \mmid \bx)
         \sum_{\bx'} P(\by \mmid \bx',\bz) \ P(\bx'),
\end{equation}
respectively $P(\by\mmid \textit{do}(\bx)) = P(\by)$ in case $\bZ = \emptyset$.
  
FD adjustment is an effective alternative to standard covariate
adjustment~\citep{glynn2018front} 
and is met with increasing applications in real-world datasets~\citep{bellemare2019paper,gupta2021estimating,chinco2016misinformed,cohen2014friends}.
Recent works~\citep{kuroki2000selection,glynn2018front,gupta2021estimating}
have improved the understanding of the statistical properties of FD estimation
and provided robust generalizations of this approach~\citep{hunermund2019causal,fulcher2020robust}.

However, despite these advantages, the algorithmic and
complexity-theoretical 
aspects of FD adjustment remained unexplored for a long time.
Very recently,  \citet{jeong2022finding} have provided the first
polynomial-time algorithm for finding an FD 
adjustment set with an $O(n^3 (n+m))$ run time. This amounts to $O(n^5)$ for dense graphs, 
which does not scale well even for a moderate number of variables. 
The authors also gave an algorithm for enumerating all FD sets, 
which has delay $O(n^4 (n+m))$.
Yet, it remained open, whether these tasks can be solved more efficiently. 

Additionally, the $O(n^3(n+m))$ algorithm by~\citet{jeong2022finding} always returns the
maximum size FD set $\bZ$, which is often unpractical as it hinders the
estimation of FD adjustment formula~\eqref{eq:fd:adjustment}, which sums
over all possible values $\bz$. 
Example (ii) in Fig.~\ref{fig:FD:examples} illustrates this issue: while $\{A,B,C,D\}$ 
is a valid FD adjustment, it is not \emph{minimal} since its proper subset, e.g.,  $\{A\}$, satisfies 
the FD criterion, as well. In this work, we address this issue of
finding a minimal FD set. A feature, which may make the problem difficult to solve, 
is the non-monotonicity of the FD criterion, illustrated in  Fig.~\ref{fig:FD:examples}.(iii). 

The main contributions of this work are threefold:
\begin{itemize}
\item We present the first linear-time, that is $O(n+m)$, algorithm,
  for finding an FD adjustment set. 
	This run time is asymptotically optimal, as the size of the input is $\Omega(n+m)$. 
\item For enumeration of FD adjustment sets, we provide an $O(n (n+m))$-delay algorithm.
\item We give the first linear-time algorithm for finding a
  \emph{minimal} FD set. 
\end{itemize}
Thus, our results show that, in terms of computational complexity,
the problems of finding and enumerating FD sets are not harder than for
the well-studied covariate adjustment and indeed, our algorithms match
the run time of the methods used in this setting. 
In addition, our work offers implementations of the new algorithms in multiple 
programming languages to facilitate practical 
usage and we empirically validate their feasibility, even for large graphs.

\section{Preliminaries}\label{sec:prel}
A directed graph $G = (\bV,\bE)$ consists of a set of vertices (or variables)
$\bV$ and a set of
directed edges $\bE \subseteq \bV \times \bV$. In case of a directed edge $A
\rightarrow B$, vertex $A$ is called a \emph{parent} of $B$ and $B$ is a
\emph{child} of $A$. In case
there is a \emph{causal} path $A \rightarrow \dots \rightarrow
B$, then $A$ is called an \emph{ancestor} of $B$ and $B$ is a
\emph{descendant} of $A$. Vertices are descendants and ancestors of themselves,
but not parents/children. The sets of
parents, children, ancestors, and descendants of a vertex $V$ are denoted by
$\Pa(V)$, $\Ch(V)$, $\An(V)$, and $\De(V)$, and they generalize to
sets $\bV$ in the natural way.
We consider only acyclic graphs (DAGs), i.e., if $B \in \De(A)$, then there is
no edge $B \rightarrow A$. We denote by $G_{\overline{\bS}}$  the graph
obtained by removing from $G$ all edges $\rightarrow S$ for every $S \in \bS$, and
by $G_{\underline{\bS}}$, the graph obtained  by removing  $\leftarrow S$ for every $S \in \bS$.

The statement $(\bA \indep \bB \mid \bC)_G$ in a DAG $G$ holds for pairwise disjoint sets
of vertices $\bA, \bB,
\bC \subset \bV$ if $\bA$ and $\bB$ are \emph{d-separated} in $G$ given $\bC$ -- that is,
if there is no \emph{open} path from some vertex $A \in \bA$ to a vertex $B \in \bB$
given $\bC$. A \emph{path} is a sequence of adjacent, pairwise different vertices and
 it  is \emph{open} if, for any collider $Y$ on the path (that is $X
\rightarrow Y
\leftarrow Z$), we have $\De(Y) \cap \bC \neq \emptyset$, and, for any non-collider
$Y$, we have $Y \not\in \bC$.
In this work, we sometimes consider \emph{ways} instead of paths. A way may
contain a vertex up to two times 
and is open given $\bC$ if, for any collider $Y$, we have $Y \in
\bC$ and, for any non-collider $Y$, we have $Y \not\in \bC$. In case there is an
open way between two sets of vertices, there is also an open path and
vice versa (see Appendix~\ref{appendix:bb} for details).
Hence, ways
can be used to determine d-separation and they make up the traversal sequence 
of the well-known Bayes-Ball algorithm~\citep{Shachter98} for testing
d-separation in linear time.
A path (or way) from $A$ to $B$ is called a \emph{back-door} (BD) path (or way) if
it starts with the edge $A
\leftarrow$. For a set of vertices $\bA$, we often speak of
\emph{proper} back-door (BD)
paths (or ways), which are such that the path $A \leftarrow \dots B$ for $A \in
\bA$ and $B \in \bB$ does not contain any other vertex in $\bA$.

Let  $G = (\bV,\bE)$ be a DAG  and let $\bI,\bR$, with  
$\bI \subseteq \bR$,  be subsets of vertices. 
Given pairwise disjoint $\bX,\bY,\bZ\subset\bV$,
set $\bZ$, with the constraint 
$\bI \subseteq \bZ \subseteq \bR$, satisfies the
\emph{front-door criterion} relative to $(\bX, \bY)$ in $G$ if~\citep{pearl1995causal}: 
\begin{description}
  \item[\textbf{FD(1).}] The set $\bZ$ intercepts all directed paths from $\bX$ to $\bY$.
  \item[\textbf{FD(2).}] There is no unblocked proper BD path from $\bX$ to $\bZ$, i.e.,
    $(\bZ \indep \bX)_{G_{\underline{\bX}}}$.
  \item[\textbf{FD(3).}] All proper BD paths from $\bZ$ to $\bY$ are blocked by $\bX$,
    i.e., $(\bZ \indep \bY \mid \bX)_{G_{\underline{\bZ}}}$.
\end{description}
The set $\bI$ consists of variables that \emph{must} be included in
the FD set, the set of variables $\bR$ consists of the ones that \emph{can} be used.
We consider graphs consisting \emph{only} of directed edges.
Bidirected edges $A \leftrightarrow B$ are frequently used to represent
confounding and can be replaced by $A \leftarrow U \rightarrow B$,
where $U$ is a new variable, which is not in $\bR$, in order to make use of the
algorithms presented here. 

The following algorithmic idea for finding a set $\bZ$ satisfying the
FD
criterion relative to $(\bX,\bY)$ (if such a set exists) was recently given by~\citet{jeong2022finding}:
\begin{enumerate}[(i)]
  \item Let $\bZ_{(\mathrm{i})} \subseteq \bR$ be the set of all variables $Z
    \in \bR$, which
  satisfy $(Z \indep \bX)_{G_{\underline{\bX}}}$.
\item Let $\bZ_{(\mathrm{ii})} \subseteq \bZ_{(\mathrm{i})}$ be the
  set of all 
$Z \in
  \bZ_{(\mathrm{i})}$, for which $\exists \bS \subseteq \bZ_{(\mathrm{i})}$ 
  s.t.~$(\{Z\} \cup \bS \indep \bY \mid
        \bX)_{G_{\underline{\{Z\} \cup \bS}}}$. 
  \item If $\bI \subseteq \bZ_{(\mathrm{ii})}$ and $\bZ_{(\mathrm{ii})}$
    intercepts all causal paths from $\bX$ to $\bY$, then output
    $\bZ_{(\mathrm{ii})}$, else output $\bot$.
\end{enumerate}

This algorithm is correct because all vertices \emph{not} in $\bZ_{(\mathrm{ii})}$
cannot be in any set satisfying the FD criterion, as they are not in
$\bR$ or would violate FD(2) and/or FD(3).
It follows from this maximality of $\bZ_{(\mathrm{ii})}$ that if $\bI$ is not a
subset of $\bZ_{(\mathrm{ii})}$ or if $\bZ_{(\mathrm{ii})}$ does not
satisfy FD(1), then no set does
(if some set $\bZ$ satisfies FD(1), then any superset does as well).
\citet{jeong2022finding} showed that step (i) can be performed in time $O(n
(n+m))$, step (ii) in time $O(n^3(n+m))$ and step (iii) in time $O(n+m)$. 

\section{A Linear-Time Algorithm for Finding Front-Door Adjustment Sets}
\label{section:lintimefinding}
In this section, we show how step (i) and (ii) can be performed in time $O(n+m)$, which leads
to the first linear-time algorithm for finding front-door adjustment sets.
For (i), the Bayes-Ball algorithm~\citep{Shachter98} can be used. It
computes all variables d-connected to a given set of variables in time
$O(n+m)$. As the Bayes-Ball algorithm forms the basis for many parts
of this work, we provide a brief introduction in
Appendix~\ref{appendix:bb}.
\begin{lemma}\label{lem:find:z1}
  It is possible to find $\bZ_{(\mathrm{i})} \subseteq \bR$, i.e., all vertices $Z$ 
  in $\bR$ with $(Z \indep
  \bX)_{G_{\underline{\bX}}}$, in time $O(n+m)$.
\end{lemma}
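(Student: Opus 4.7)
The plan is to reduce the task directly to a single run of the Bayes-Ball algorithm on a modified graph. First, I would construct $G_{\underline{\bX}}$ by removing every edge of the form $X \to \cdot$ for $X \in \bX$; this takes $O(n+m)$ time, because one pass through the adjacency lists of the vertices in $\bX$ suffices to delete their outgoing edges (or, equivalently, we can simulate the deletion on the fly during the traversal by ignoring such edges).

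Next, on $G_{\underline{\bX}}$, I would invoke the Bayes-Ball algorithm with source set $\bX$ and empty conditioning set. By the standard guarantee recalled in the text (and detailed in Appendix~\ref{appendix:bb}), this returns, in time $O(n+m)$, the set $\bD$ of all vertices $V$ that are d-connected to $\bX$ given $\emptyset$ in $G_{\underline{\bX}}$. Equivalently, a vertex $Z$ satisfies $(Z \indep \bX)_{G_{\underline{\bX}}}$ if and only if $Z \notin \bD$.

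Finally, I would compute $\bZ_{(\mathrm{i})} = \{Z \in \bR : Z \notin \bD\} \setminus \bX$, which is a single linear scan over $\bR$. Correctness follows from the definition of $\bZ_{(\mathrm{i})}$; the overall time is $O(n+m)$ since each of the three phases (graph modification, Bayes-Ball run, filtering by membership in $\bR$) runs in linear time.

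There is essentially no serious obstacle here: the only delicate point is to make sure the edge deletions defining $G_{\underline{\bX}}$ are implemented without scanning the whole graph per vertex of $\bX$, which is handled by working directly on the adjacency list representation (e.g., marking outgoing edges of $\bX$ as forbidden before running Bayes-Ball). Everything else is an immediate translation of the definition of $\bZ_{(\mathrm{i})}$ into a d-connectivity query.
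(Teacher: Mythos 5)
Your proposal is correct and matches the paper's proof essentially verbatim: both run Bayes-Ball from $\bX$ in $G_{\underline{\bX}}$ with an empty conditioning set and take the unreached vertices intersected with $\bR$. The extra implementation remarks about performing the edge deletions on the fly are fine but not needed for the argument.
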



\begin{proof}
  Start Bayes-Ball~\citep{Shachter98} (Algorithm~\ref{alg:bb} in
  Appendix~\ref{appendix:bb}) at $\bX$ in the DAG $G_{\underline{\bX}}$. Precisely the
  vertices $\bN$ not
  reached by the algorithm satisfy $(Z \indep \bX)_{G_{\underline{\bX}}}$. Hence,
  $\bZ_{(\mathrm{i})}  = \bN \cap \bR$.
\end{proof}


\tikzset{
    old inner xsep/.estore in=\oldinnerxsep,
    old inner ysep/.estore in=\oldinnerysep,
    double circle/.style 2 args={
        circle,
        old inner xsep=\pgfkeysvalueof{/pgf/inner xsep},
        old inner ysep=\pgfkeysvalueof{/pgf/inner ysep},
        /pgf/inner xsep=\oldinnerxsep+#1,
        /pgf/inner ysep=\oldinnerysep+#1,
        alias=sourcenode,
        append after command={
        let     \p1 = (sourcenode.center),
                \p2 = (sourcenode.east),
                \n1 = {\x2-\x1-#1-0.5*\pgflinewidth}
        in
            node [inner sep=0pt, draw, circle, line width = 0.8pt, minimum width=2*\n1,at=(\p1),#2] {}
        }
    },
    double circle/.default={0.5pt}{blue!80!white},
    allowed node/.style={circle, line width = 0.8pt, draw=green!80!black},
    FD node/.style={double circle}
}

\begin{figure}
  \centering
  \begin{tikzpicture}[xscale=1.2]
      \node (x) at (0,0) {$X$};
      \node[allowed node, FD node] (a) at (1,0) {$A$};
      \node[allowed node] (b) at (2,0) {$B$};
      \node (c) at (3,0) {$C$};
      \node (y) at (4,0) {$Y$};
      \node[allowed node, FD node] (d) at (2.5,-1) {$D$};
      \node (u1) at (1.5,1) {$U_1$};
      \node (u2) at (1.5,-1) {$U_2$};
      \node (u3) at (3.5,1) {$U_3$};
      \node (u4) at (3.5,-1) {$U_4$};
      \graph[use existing nodes, edges = {arc}] {
        x -- a;
        a -- b;
        b -- c;
        c -- y;
        d -- a;
        d -- u4;
        u1 -- x;
        u1 -- y;
        u2 -- x;
        u2 -- c;
        u3 -- b;
        u3 -- y;
        u4 -- y;
      };
  \end{tikzpicture}
  \caption{Running example for the algorithms for finding FD sets in
    $O(n+m)$ given in this section. Nodes in $\bZ_{(\mathrm{i})}$ are
  marked green and nodes in $\bZ_{(\mathrm{ii})}$ are marked blue.}
  \label{fig:examples}
\end{figure}
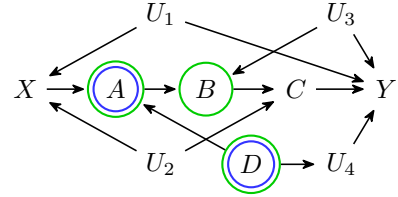
We exemplify the algorithms in this section with the running example in
Fig.~\ref{fig:examples}. Here, the goal is to find an FD set relative to $X$ and $Y$. The variables $U_1$ to $U_4$ are unobserved,
hence $\bR = \{A, B, C, D\}$. Moreover, we have $\bI = \emptyset$. 
In the graph, vertex $C$ is reachable via the
BD path $X \leftarrow U_2 \rightarrow C$ from $X$, whereas the
remaining vertices in $\bR$, $A$, $B$ and $D$, are not reachable by such a path. Hence, $\bZ_{(\mathrm{i})} = \{A,
B, D\}$. 

It remains to show how to execute step (ii), i.e., to compute 
$\bZ_{(\mathrm{ii})}$, in time $O(n+m)$.
Thus, it is our task, given a set $\bZ_{(\mathrm{i})} \subseteq \bR$ disjoint
with $\bX$ and $\bY$, which
contains all vertices satisfying (i), to decide for every $Z \in
\bZ_{(\mathrm{i})}$ whether there
exists a set $\bS \subseteq \bZ_{(\mathrm{i})}$ with $(\{Z\} \cup \bS \indep \bY \mid
\bX)_{G_{\underline{\{Z\}
\cup \bS}}}$. 

First, we define the notion of a \emph{forbidden} vertex $v$:
\begin{definition}\label{def:forbidden}
  A vertex $V$ is \emph{forbidden} if it is not in $\bZ_{(\mathrm{ii})}$. Hence, by definition
  this is the case if
  (a) $V \not\in
  \bZ_{(\mathrm{i})}$ or (b) there exists no $\bS \subseteq \bZ_{(\mathrm{i})}$
  for $V$ such that $(\{V\} \cup \bS \indep \bY
  \mid \bX)_{G_{\underline{\{V\} \cup \bS}}}$.
\end{definition}

Our goal will be to find all \emph{forbidden} vertices. The remaining vertices  then
make up the sought after set $\bZ_{(\mathrm{ii})}$. We utilize the following
key lemma. 


\begin{lemma}\label{lemma:forbidden}
  Let $G$ be a DAG and $\bX$, $\bY$ disjoint sets of vertices. Vertex $V$ is \emph{forbidden} if, and only if, 
  \begin{description}
  	\item[(A)] $V \not\in  \bZ_{(\mathrm{i})}$,  
	\item[(B)] $V\gets \bY$, or
	\item[(C)] 
  there exists an open BD way $\pi$ (consisting of at least three
  variables) from $V$ to $\bY$ given $\bX$,
  and all its nonterminal vertices are forbidden.
  \end{description}
\end{lemma}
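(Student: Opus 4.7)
The plan is to prove both directions of the equivalence separately. Let $F$ denote the set of forbidden vertices in the sense of Definition~\ref{def:forbidden}, so that $\bZ_{(\mathrm{ii})} = \bZ_{(\mathrm{i})} \setminus F$.

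For the ``if'' direction, I would dispatch each case. (A) is immediate. For~(B), the edge $Y \to V$ survives in $G_{\underline{\{V\} \cup \bS}}$ for every $\bS \subseteq \bZ_{(\mathrm{i})}$ (since $\bY \cap \bZ_{(\mathrm{i})} = \emptyset$), giving a trivial d-connection from $V$ to $\bY$ given $\bX$. The substantive case is~(C): fixing $\bS \subseteq \bZ_{(\mathrm{i})}$, I would split on whether some nonterminal of $\pi$ lies in $\bS$. If none does, $\pi$ survives as an open BD way in $G_{\underline{\{V\} \cup \bS}}$, because every edge of $\pi$ has its tail either a nonterminal (outside $\bS$ by assumption) or equal to $Y \in \bY$ (disjoint from $\bZ_{(\mathrm{i})}$), hence outside $\{V\} \cup \bS$. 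If some nonterminal $W$ of $\pi$ does lie in $\bS$, then $W \in \bZ_{(\mathrm{i})}$ and $W \in F$ by hypothesis, so Definition~\ref{def:forbidden}(b) applied to $W$ with the set $\bS' := \{V\} \cup \bS \subseteq \bZ_{(\mathrm{i})}$ yields a d-connection from $\{W\} \cup \bS' = \{V\} \cup \bS$ to $\bY$ in $G_{\underline{\{W\} \cup \bS'}} = G_{\underline{\{V\} \cup \bS}}$ (both coincide because $W \in \bS$). Either way the independence demanded by Definition~\ref{def:forbidden}(b) fails for $V$, placing $V \in F$. A pleasant feature is that no induction on the length of $\pi$ is needed: the witness for the blocked case is pulled directly from Definition~\ref{def:forbidden}(b) applied to $W$.

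For the ``only if'' direction, I would assume $V \in F$ and, barring~(A), that $V \in \bZ_{(\mathrm{i})}$. Instantiating Definition~\ref{def:forbidden}(b) with $\bS = \bZ_{(\mathrm{ii})}$ gives a d-connection from $\{V\} \cup \bZ_{(\mathrm{ii})}$ to $\bY$ given $\bX$ in $H := G_{\underline{\{V\} \cup \bZ_{(\mathrm{ii})}}}$. The crucial additional ingredient is the correctness of Jeong et al.'s construction, which guarantees that $\bZ_{(\mathrm{ii})}$ itself satisfies FD(3), i.e., $(\bZ_{(\mathrm{ii})} \indep \bY \mid \bX)_{G_{\underline{\bZ_{(\mathrm{ii})}}}}$. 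Since $H$ is a subgraph of $G_{\underline{\bZ_{(\mathrm{ii})}}}$, this separation persists in $H$, so the d-connection must emanate from $V$. Passing to the equivalent open-way formulation (Appendix~\ref{appendix:bb}), I obtain an open BD way $\pi$ from $V$ to some $Y \in \bY$ in $H$ given $\bX$ (BD because $V$'s outgoing edges are absent in $H$), which is also an open BD way in $G$. If $\pi$ has length one, (B) holds; otherwise (C) holds, which I would verify by inspecting each nonterminal $W$: a non-collider has an outgoing edge on $\pi$ surviving in $H$, so $W \notin \{V\} \cup \bZ_{(\mathrm{ii})}$ and hence $W \in F$; a collider must lie in $\bX$, which is disjoint from $\bR \supseteq \bZ_{(\mathrm{i})}$, so $W \in F$ via case~(A).

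The main obstacle is exactly this forward direction: the d-connection promised by Definition~\ref{def:forbidden}(b) could a priori start at any vertex of $\{V\} \cup \bZ_{(\mathrm{ii})}$, and FD(3) for $\bZ_{(\mathrm{ii})}$ is what pins the starting vertex to $V$. Should a self-contained derivation be preferred, FD(3) for $\bZ_{(\mathrm{ii})}$ can be re-obtained using only the already-proved ``if'' direction: any open BD way from some $Z \in \bZ_{(\mathrm{ii})}$ to $\bY$ in $G_{\underline{\bZ_{(\mathrm{ii})}}}$ given $\bX$ would, by the length dichotomy above, witness (B) or (C) for $Z$, placing $Z \in F$ and contradicting $Z \in \bZ_{(\mathrm{ii})} = \bZ_{(\mathrm{i})} \setminus F$.
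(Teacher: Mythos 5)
Your proof is correct, but the completeness direction is argued along a genuinely different route than the paper's. For the ``if'' direction both arguments share the same core idea, though your case split is the sharper rendering: the paper disposes of case (C) with the one-liner that the way ``could only be closed by adding one of its vertices to $\bS$, but all of them are forbidden,'' whereas you make explicit the real mechanism --- adding a forbidden nonterminal $W \in \bS$ does close $\pi$, but then Definition~\ref{def:forbidden}(b) applied to $W$ with the enlarged set $\{V\}\cup\bS$ produces a fresh violation in the very same graph $G_{\underline{\{V\}\cup\bS}}$, so no choice of $\bS$ escapes. For the ``only if'' direction the paper works contrapositively and \emph{constructs} a blocking set $\bS$ from scratch, as the union over all open ways from $V$ to $\bY$ of a chosen non-forbidden vertex $U$ together with its witness set $\bS_U$, then argues this union closes every way without opening new ones. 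You instead argue directly: instantiate Definition~\ref{def:forbidden}(b) with the single canonical set $\bS = \bZ_{(\mathrm{ii})}$, use FD(3) for $\bZ_{(\mathrm{ii})}$ plus monotonicity of edge removal to pin the resulting d-connection to $V$, and read off the open BD way whose non-collider nonterminals are outside $\{V\}\cup\bZ_{(\mathrm{ii})}$ (hence forbidden) and whose colliders lie in $\bX$ (hence forbidden via (A), since $\bZ_{(\mathrm{i})}\cap\bX=\emptyset$ --- note it is $\bZ_{(\mathrm{i})}$, not $\bR$, that is disjoint from $\bX$). You correctly identify that invoking FD(3) for $\bZ_{(\mathrm{ii})}$ from the correctness of \citet{jeong2022finding} would import an external result, and your re-derivation of it from the already-proved ``if'' direction closes that loop cleanly. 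The trade-off: the paper's union construction is fully self-contained but terse about why the union does not reopen closed ways; your argument is more modular and avoids the explicit construction, at the cost of first establishing (or recovering) the global property FD(3) of $\bZ_{(\mathrm{ii})}$.
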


\begin{proof}
  We show two directions. First, if $V \in \bZ_{(\mathrm{i})}$, there is no
  edge $V \leftarrow \bY$ and there exists no open
  BD way with only forbidden vertices from $V$ to $\bY$ given $\bX$, then
  there exists a set $\bS \subseteq
  \bZ_{(\mathrm{i})}$, for which
  $(\{V\} \cup \bS \indep \bY \mid \bX)_{G_{\underline{\{V\} \cup \bS}}}$ holds. It can be
  constructed by choosing, for every open way from $V$ to $\bY$, a non-forbidden vertex $W$ and
  its set $\bS_W$ (which fulfills $(\{W\} \cup \bS_W \indep \bY \mid
  \bX)_{G_{\underline{\{W\}
  \cup \bS_W}}}$) and taking the union $\bigcup_W (\{W\} \cup \bS_W) = \bS$. 
  In particular, by taking $W$ into $\bS$, we close the open way it is on as
  $W$ is a non-collider (it is not in $\bX$ as it is non-forbidden by definition) and hence the way is
  cut, due to  the removal of outgoing edges from $W$. By adding all vertices
  in $\bS_W$, it holds that $W$
  and the vertices in $\bS_W$ have no open BD way to $\bY$ given $\bX$. For
  this, note 
  that if $(\{W\} \cup \bS_W
  \indep \bY \mid \bX)_{G_{\underline{\{W\} \cup \bS_W}}}$ is true, then we also have for
  every set $\bS' \supseteq \{W\} \cup \bS_W$ that $(\{W\} \cup \bS_W \indep \bY \mid
  \bX)_{G_{\underline{\bS'}}}$. Hence, taking the union $\bigcup_{W} (\{W\} \cup
  \bS_W)$ will not open any previously closed ways. 

  Second, if $V$ is not in $\bZ_{(\mathrm{i})}$, then $V$ is forbidden by
  Definition~\ref{def:forbidden} (part \emph{(a)}) and if
  there exists an open BD way $\pi$ with only forbidden
  vertices, a set $\bS$
  satisfying \emph{(b)} can never be found (the way could only be closed by adding one
  of its vertices to $\bS$, but all of them are forbidden).
\end{proof}

\IncMargin{.5em}
\begin{algorithm}[!htp] 
  \DontPrintSemicolon
  \SetKwInOut{Input}{input}\SetKwInOut{Output}{output}
  \Indmm
  \Input{DAG $G = (\bV,\bE)$ and sets $\bX$, $\bY$, $\bZ_{(\mathrm{i})}
  \subset \bV$.}
  \Output{Set $\bZ_{(\mathrm{ii})}$.}
  \Indpp
  \SetKwFunction{FVisit}{visit\!}
  \SetKwProg{Fn}{function}{}{end}

  Initialize $\mathrm{visited}[V,\mathrm{inc}]$,
  $\mathrm{visited}[V,\mathrm{out}]$ and $\mathrm{continuelater}[V]$
  with \texttt{false} for all $V \in \bV$. \;
  $\mathrm{forbidden}[V] := \texttt{true}$ if $V \not\in
  \bZ_{(\mathrm{i})}$ else \texttt{false}. \; \label{line:setforbidden}

  \Fn{\FVisit{$G,V,\mathrm{edgetype}$}}{
    $\mathrm{visited}[V, \mathrm{edgetype}] := \texttt{true}$ \;
    $\mathrm{forbidden}[V] := \texttt{true}$ \;
    
    \If{$V \not\in \bX$\label{line:childif}}{
      \ForEach{$W \in \Ch(V)$}{
        \lIf{$\textbf{\emph{not }} \mathrm{visited}[W, \mathrm{inc}]$\label{line:visitchild}}{
          \FVisit{$G,W,\mathrm{inc}$}
        }
      }
    \If{$
      \mathrm{edgetype} = \mathrm{out} 
    $\label{line:parentif}}{
    \ForEach{$W \in \Pa(V)$}{
    
      \uIf{\textbf{\emph{not}} $\mathrm{visited}[W,\mathrm{out}]$\label{line:parentforbiddenif}}{
      	\uIf{$\mathrm{forbidden}[W]$}{\label{line:parentforbiddenif}
        \FVisit{$G,W,\mathrm{out}$} \;
      }
       \Else{$\mathrm{continuelater}[W] := \texttt{true}$ \label{line:parentnotforbiddenif}}
       }
%
    }
  }}
  \lIf{$\mathrm{continuelater}[V]$ \textbf{\emph{and not}}
  $\mathrm{visited}[V,\mathrm{out}]$\label{line:continue}}{
    \FVisit{$G,V,\mathrm{out}$}
  }
  }
  \ForEach{$Y \in \bY$}{
    \lIf{$\textbf{\emph{not }}\mathrm{visited}[Y, \mathrm{out}]$}{
      \FVisit{$G,Y,\mathrm{out}$}
    }
  }

  \Return $\bV \setminus \{V \mid \mathrm{forbidden}[V] =
  \texttt{true}\}$\;
  \caption{Finding the set $\bZ_{(\mathrm{ii})}$ in time $O(n+m)$. 
  }
  \label{alg:findingzab}
\end{algorithm}
\DecMargin{.5em}
In Fig.~\ref{fig:examples}, $X$, $Y$, all $U_i$ and $C$
are forbidden as they are not in $\bZ_{(\mathrm{i})}$ (condition (A) of
Lemma~\ref{lemma:forbidden}). Moreover, vertex $B$ is forbidden, as
there is a BD way from it via forbidden vertex $U_3$ to $Y$ (condition
(C) of Lemma~\ref{lemma:forbidden}). In
contrast, $A$ and $D$ are not forbidden and it
follows that $\bZ_{(\mathrm{ii})} = \{A, D\}$.

Lemma~\ref{lemma:forbidden} suggests an algorithmic approach for finding all forbidden vertices in
linear time (a formal description is given in
Algorithm~\ref{alg:findingzab}). First, we mark all vertices $V \not\in
\bZ_{(\mathrm{i})}$ as forbidden (line~\ref{line:setforbidden}). We then start a
graph search, similar to Bayes-Ball, at $\bY$ visiting only forbidden
vertices. Each vertex is visited at most twice, once through an incoming
and once through an outgoing edge. For this, when handling a vertex $V$, we iterate over those of its neighbors,
which could extend the path over forbidden vertices with which $V$ was
reached. As in Bayes-Ball,
this depends on the direction of the edges (collider/non-collider) and
membership in $\bX$. If a neighbor $W \not\in \bX$ is a child of $V$, we visit it
and it is consequently marked forbidden (line~\ref{line:visitchild}),
as we either have (B) that $W \leftarrow \bY$ (if $V \in \bY$) or (C) an open \emph{BD
way} from $W$ over $V$ to $\bY$ via forbidden vertices. If $W \not\in \bX$ is a parent of
$V$ and already marked 
forbidden (line~\ref{line:parentforbiddenif}),
we visit it (if it has not already been visited through an outgoing edge 
$W\rightarrow$). If it is not marked forbidden
(line~\ref{line:parentnotforbiddenif}), we record in
$\mathrm{continuelater}[W]$ that it is
connected to $\bY$ via a \emph{non-BD way} over forbidden vertices. However,
we do not visit it directly, as our search only visits forbidden
vertices. If it later becomes forbidden, we
continue the search at that point (line~\ref{line:continue}).
Note that we do not need to consider the case $W \in \bX$, as is shown
in the proof \footnote{We defer some proofs to the appendix.
Theorem~\ref{thm:findingzab} is proved in Appendix~\ref{appendix:missingproofssec3}. } 
of Theorem~\ref{thm:findingzab}.


\begin{theorem}\label{thm:findingzab}
  Given a DAG and sets $\bX, \bY, \bZ_{\mathrm{(i)}}$, Algorithm~\ref{alg:findingzab} computes the set $\bZ_{(\mathrm{ii})}$ in time $O(n+m)$.
\end{theorem}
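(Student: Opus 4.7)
The plan is to prove correctness ($\mathrm{forbidden}[V]=\mathtt{true}$ at termination iff $V$ is forbidden in the sense of Lemma~\ref{lemma:forbidden}) together with the $O(n+m)$ time bound. The time bound is the easier half: the flags $\mathrm{visited}[V,\mathrm{inc}]$ and $\mathrm{visited}[V,\mathrm{out}]$ are checked before every recursive entry (lines~\ref{line:visitchild},~\ref{line:parentforbiddenif},~\ref{line:parentnotforbiddenif}, and~\ref{line:continue}), so each vertex enters \texttt{visit} at most twice and performs $O(\deg(V))$ scanning work per invocation; summing, plus initialization and the outer loop over $\bY$, gives $O(n+m)$.

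For the forward direction (marked vertices are forbidden) I would maintain, by induction on the call tree, two invariants: (i) \texttt{visit}$(V,\mathrm{inc})$ is only entered when there is an open BD way from $V$ to $\bY$ given $\bX$ whose internal vertices are already forbidden, so $V$ itself meets (B) or (C) of Lemma~\ref{lemma:forbidden}; (ii) \texttt{visit}$(V,\mathrm{out})$ is only entered when $V$ is already forbidden (either $V \in \bY$, was marked earlier, or was just marked by the preceding \texttt{visit}$(V,\mathrm{inc})$ that triggers line~\ref{line:continue}). Each inductive step prepends one edge to the caller's witness way; the guards $V \notin \bX$ on lines~\ref{line:childif}--\ref{line:parentif} ensure the newly internal vertex is a non-collider not in $\bX$, preserving openness.

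The converse direction is the harder half. Case (A) of Lemma~\ref{lemma:forbidden} is taken care of on line~\ref{line:setforbidden} and case (B) by the initial call at $Y \in \bY$ together with its child loop. The substantive case is (C) with $V \in \bZ_{(\mathrm{i})}$, and the pivotal structural observation is that any witness BD way $\pi$ from such a $V$ to $\bY$ must entirely avoid $\bX$: if $Z$ were the first $\bX$-vertex met along $\pi$ starting from $V$, the initial sub-way $V \leftarrow \cdots Z$ would have only non-collider internals (non-colliders on an open way cannot be in $\bX$, and $Z$ was picked as the first), so it would be a marginally open way in $G_{\underline{\bX}}$ from $V$ to $Z \in \bX$, contradicting $V \in \bZ_{(\mathrm{i})}$. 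This simultaneously justifies the algorithm's silent handling of the $V \in \bX$ case. With $\pi \cap \bX = \emptyset$, an induction on a well-founded measure such as the shortest-witness length then shows that the DFS actually walks $\pi$ from $Y$ back to $V$ and marks it.

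The main obstacle is the interplay between this walk and the $\mathrm{continuelater}$ mechanism: when the traversal needs to step from some $V_i$ (visited in \texttt{out} mode) to its parent $V_{i-1}$ on $\pi$ and $V_{i-1}$ is not yet marked forbidden, the algorithm only records $\mathrm{continuelater}[V_{i-1}]$ instead of recursing immediately. Closing the induction amounts to showing that $V_{i-1}$ is subsequently visited via \texttt{inc} through its own (strictly shorter) witness, which flags it as forbidden and then fires the deferred \texttt{visit}$(V_{i-1},\mathrm{out})$ via line~\ref{line:continue}, so that the traversal of $\pi$ resumes all the way up to $V$.
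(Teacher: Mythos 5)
Your time-bound argument, your soundness invariants, and the structural observation that a witness BD way from a vertex of $\bZ_{(\mathrm{i})}$ to $\bY$ must avoid $\bX$ are all correct and coincide with the paper's proof (the paper derives the $\bX$-avoidance the same way: a non-collider in $\bX$ would block the way, and a collider in $\bX$ would give an open way to $\bX$ in $G_{\underline{\bX}}$, contradicting membership in $\bZ_{(\mathrm{i})}$). The gap is in how you close the completeness direction for case (C). Because the witness avoids $\bX$ it has no colliders, so it has the shape $V \leftarrow \cdots \leftarrow W \to \cdots \to Y$, and the delicate part is the forward segment, which the search must traverse from $Y$ backwards through parents -- exactly where $\mathrm{continuelater}$ intervenes. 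You assert that a deferred parent $V_{i-1}$ is "subsequently visited via inc through its own (strictly shorter) witness." But the suffix of $\pi$ from $V_{i-1}$ starts with an outgoing edge and is therefore not a BD way from $V_{i-1}$; that vertex is forbidden by virtue of some entirely separate BD way whose length bears no relation to $|\pi|$. Your induction measure therefore need not decrease, and the argument as stated is circular: marking $V$ requires $V_{i-1}$ to be inc-visited, which requires traversing $V_{i-1}$'s own witness, which may contain deferred vertices whose witnesses are longer still.

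The paper breaks this circularity with an extremal argument instead of an induction on length: suppose at termination the set $\bU$ of forbidden-but-unmarked vertices is nonempty, and show that some $U_i \in \bU$ admits a witness containing no other vertex of $\bU$, so that all its internal vertices are already marked (either at line~\ref{line:setforbidden} or by having been visited). The existence of such a witness follows because, if every open BD way from every $U_i$ to $\bY$ had its last $\bU$-vertex occurring with an outgoing edge $U_j \to$, then $(\bU \indep \bY \mid \bX)_{G_{\underline{\bU}}}$ would hold and the vertices of $\bU$ would not be forbidden at all. Once the witness runs entirely through already-marked vertices, the traversal argument you sketch does go through, $U_i$ is visited as a child of its neighbour on the witness, and the contradiction completes the proof. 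You would need to replace your length induction with this extremal choice (or an equivalent least-fixed-point staging of the forbidden set).
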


For the graph in Fig.~\ref{fig:examples}, first the vertices not in
$\bZ_{(\mathrm{i})}$, i.e., $X$, $Y$, $U_1$ to $U_4$ and $C$, are
marked forbidden. The graph search starts at $Y$ and visits $U_3$ and
$U_4$ as they are parents of $Y$ \emph{and} already marked forbidden.
Vertex $B$ is visited as child of $U_3$ and thus marked forbidden as
well. In contrast $D$, as parent of $U_4$ is not visited and hence not marked
forbidden (it is merely marked in $\mathrm{continuelater}$). This
corresponds to the fact that there is no set $\bS$ for which $(\{B\}
\cup \bS \indep \{Y\} \mid \{X\})_{G_{\underline{\{B\}\cup \bS}}}$
holds, whereas $(\{D\} \indep \{Y\} \mid
\{X\})_{G_{\underline{\{D\}}}}$ is true (i.e., $\bS = \emptyset$
contradicts condition \emph{(b)} in Definition~\ref{def:forbidden}). The
search then halts and $A$ and $D$ remain as non-forbidden vertices,
which hence make up $\bZ_{(\mathrm{ii})}$.

As the remaining step of testing whether $\bZ_{(\mathrm{ii})}$ fulfills condition
FD(1) can be performed in linear-time as
well~\citep{jeong2022finding}, overall linear-time follows. The full
algorithm for obtaining an FD set $\bZ$ is given in Algorithm~\ref{alg:finding}.
\IncMargin{.5em}
\begin{algorithm}
  \DontPrintSemicolon
  \SetKwInOut{Input}{input}\SetKwInOut{Output}{output}
  \Indmm
  \Input{A DAG $G = (\bV,\bE)$ and sets $\bX$, $\bY$, $\bI\subseteq\bR\subseteq\bV$.}
  \Output{Set $\bZ$ with $\bI\subseteq\bZ\subseteq\bR$ or $\bot$.}
  \Indpp
  \SetKwFunction{FVisit}{visit}
  \SetKwProg{Fn}{function}{}{end}

  Start the Bayes-Ball~\citep{Shachter98} algorithm at $\bX$ in
  $G_{\underline{\bX}}$. Let $\bN$ be the non-visited vertices.
  
  Compute $\bZ_{(\mathrm{i})}  := \bN \cap \bR$ and $\bZ_{(\mathrm{ii})}$ by Algorithm~\ref{alg:findingzab}.
  
  Start a depth-first search (DFS) at $\bX$ following only directed edges,
  which stops at vertices in $\bZ_{(\mathrm{ii})}\cup \bY$. Let $\bW$ be the set of visited vertices.

  \uIf{$\bI \subseteq \bZ_{(\mathrm{ii})}$ \textbf{\emph{and}} $\bW \cap \bY = \emptyset$}{
    \Return $\bZ_{(\mathrm{ii})}$
  }
  \Else{\Return{$\bot$}}
  
  \caption{Finding an FD set $\bZ$ relative to $(\bX,\bY)$ in time $O(n+m)$.}
  \label{alg:finding}
\end{algorithm}
\DecMargin{.5em}

\begin{theorem}\label{thm:findcorrect}
Given a DAG and sets $\bX$ and $\bY$,
  Algorithm~\ref{alg:finding} finds an FD set $\bZ$ relative to $(\bX,\bY)$ with $\bI\subseteq\bZ\subseteq\bR$, or decides that such a set does not exist, in time $O(n+m)$.
\end{theorem}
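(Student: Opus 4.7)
The plan is to show that Algorithm~\ref{alg:finding} is a faithful implementation of the three-step algorithmic template (i)--(iii) recalled from \citet{jeong2022finding} in Section~\ref{sec:prel}, and that every step runs in linear time. Correctness of the template itself has already been argued in the preliminaries: any vertex outside $\bZ_{(\mathrm{ii})}$ either lies outside $\bR$ or else must violate FD(2) or FD(3), so by maximality of $\bZ_{(\mathrm{ii})}$, if $\bI\not\subseteq\bZ_{(\mathrm{ii})}$ or $\bZ_{(\mathrm{ii})}$ fails FD(1), then no set $\bZ$ with $\bI\subseteq\bZ\subseteq\bR$ can satisfy the FD criterion relative to $(\bX,\bY)$; conversely, $\bZ_{(\mathrm{ii})}$ satisfies FD(2) and FD(3) by construction, and hence whenever the two checks succeed it is a valid output.

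For correctness of the individual steps of Algorithm~\ref{alg:finding}: the first two lines compute $\bZ_{(\mathrm{i})}$ and $\bZ_{(\mathrm{ii})}$ correctly by Lemma~\ref{lem:find:z1} and Theorem~\ref{thm:findingzab}, respectively. For the third step, I would argue that the DFS from $\bX$ along directed edges, halting at any vertex of $\bZ_{(\mathrm{ii})}\cup\bY$, visits precisely those vertices reachable by a directed path from $\bX$ that avoids $\bZ_{(\mathrm{ii})}$ (except possibly at its endpoint). Consequently, $\bW\cap\bY=\emptyset$ holds if and only if every directed path from $\bX$ to $\bY$ contains at least one vertex of $\bZ_{(\mathrm{ii})}$, which is exactly condition FD(1). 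The final conditional then returns $\bZ_{(\mathrm{ii})}$ when both $\bI\subseteq\bZ_{(\mathrm{ii})}$ and FD(1) hold, and $\bot$ otherwise, matching step (iii).

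For the running time, each step is $O(n+m)$: Bayes-Ball runs in linear time on $G_{\underline{\bX}}$, which is constructible in $O(n+m)$; Algorithm~\ref{alg:findingzab} runs in $O(n+m)$ by Theorem~\ref{thm:findingzab}; the truncated DFS visits every edge at most once, and the final set inclusion and intersection tests take $O(n)$ time after a standard linear-time preprocessing (e.g., marking membership in $\bI$ and $\bZ_{(\mathrm{ii})}$ via Boolean arrays). Summing these bounds gives the claimed $O(n+m)$ total runtime.

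I do not anticipate any substantial obstacle here, as the theorem mostly aggregates previously established results. The only genuine (but routine) point needing care is verifying that the truncated DFS of step three correctly certifies FD(1); this is standard but deserves an explicit one-line argument since FD(1) concerns all directed paths and the DFS detects a \emph{path} to $\bY$ avoiding $\bZ_{(\mathrm{ii})}$ exactly when $\bW\cap\bY\neq\emptyset$.
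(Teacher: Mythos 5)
Your proposal is correct and follows essentially the same route as the paper's proof: it invokes Lemma~\ref{lem:find:z1} and Theorem~\ref{thm:findingzab} for the first two steps, reduces the final check to verifying $\bI\subseteq\bZ_{(\mathrm{ii})}$ and FD(1) via the truncated DFS, and appeals to the maximality argument of \citet{jeong2022finding} for overall correctness. You merely spell out in more detail the DFS-certifies-FD(1) step and the runtime accounting, which the paper leaves implicit.
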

\begin{proof}
According to Lemma~\ref{lem:find:z1} and Theorem~\ref{thm:findingzab},
the algorithm correctly computes $\bZ_{(\mathrm{i})}$ and
$\bZ_{(\mathrm{ii})}$. The algorithm 
  verifies the conditions that $\bI \subseteq \bZ_{(\mathrm{ii})}$
  and, by following directed edges, that $\bZ_{(\mathrm{ii})}$
    intercepts all causal paths from $\bX$ to $\bY$. As
    \citet{jeong2022finding} have shown, this makes
    $\bZ_{(\mathrm{ii})}$ an FD set.
\end{proof}

Revisiting Fig.~\ref{fig:examples}, the set $\bZ_{(\mathrm{ii})} = \{A,
D\}$ indeed blocks all causal paths between $X$ and $Y$ and is hence a
valid FD set. This can be checked by starting a
DFS at $X$, which follows the edge $X \rightarrow A$ and
does not continue from there as $A \in \bZ_{(\mathrm{ii})}$. As $X$
has no other children, the search terminates.

It follows from Theorem~\ref{thm:findcorrect}, that given a DAG and sets
$\bX$, $\bY$, and $\bZ$, one can verify in linear-time whether the set
$\bZ $ is an
FD set $\bZ$ relative to $(\bX,\bY)$ by setting
$\bI=\bR=\bZ$ and calling Algorithm~\ref{alg:finding} as it finds
$\bZ$  if and only if $\bZ$ is an FD set.

Moreover, with standard techniques developed by~\citet{van2014constructing} and
applied by~\citet{jeong2022finding} to the FD criterion, it is
possible to enumerate all FD sets with delay $O(n  \cdot
\mathrm{find}(n,m))$, where $\mathrm{find}(n,m)$ is the time it takes
to find an FD set in a graph with $n$ vertices and $m$ edges. Hence,
with the linear-time Algorithm~\ref{alg:finding} for finding FD sets presented in this work, an $O(n (n + m))$ delay enumeration algorithm follows directly. This improves the previous best run time of $O(n^4 (n+m))$ by~\citet{jeong2022finding} by a factor of $n^3$.

\begin{corollary}
  There exists an algorithm for enumerating all front-door adjustment sets in a
  DAG $G = (\bV, \bE)$ with delay $O(n (n+m))$.
\end{corollary}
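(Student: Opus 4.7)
The plan is to instantiate the standard enumeration-by-branching framework of \citet{van2014constructing}, previously adapted to the FD setting by \citet{jeong2022finding}, using our linear-time Algorithm~\ref{alg:finding} as the oracle. Concretely, we exploit the fact that Algorithm~\ref{alg:finding} accepts arbitrary constraint sets $\bI \subseteq \bR$ and, in time $O(n+m)$, either returns an FD set $\bZ$ with $\bI \subseteq \bZ \subseteq \bR$ or certifies that none exists.

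The enumeration proceeds as a depth-first traversal of a binary recursion tree whose nodes are parametrized by pairs $(\bI,\bR)$ with $\bI\subseteq\bR\subseteq\bV\setminus(\bX\cup\bY)$. At a node $(\bI,\bR)$ we call Algorithm~\ref{alg:finding}; if it returns $\bot$, we prune. Otherwise, if $\bI = \bR$, the unique feasible set is $\bZ = \bI$, which we output. If $\bI \subsetneq \bR$, we pick any vertex $V \in \bR \setminus \bI$ and branch on its membership, recursing first into $(\bI \cup \{V\},\bR)$ and then into $(\bI, \bR \setminus \{V\})$. The root call is $(\emptyset, \bV\setminus(\bX\cup\bY))$. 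Correctness is immediate: every FD set occurs in exactly one leaf, and no set is produced more than once since the two child branches partition the feasible region on the value of $V$.

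For the delay bound, the critical point is that we never descend into an infeasible branch: Algorithm~\ref{alg:finding} is invoked before each recursive call, so every node on the current root-to-current-leaf path corresponds to at least one FD set. Since each branching step strictly enlarges $\bI$ or strictly shrinks $\bR$, the depth of the recursion tree is at most $|\bR| \le n$. Between two consecutive outputs, the DFS traverses at most $O(n)$ tree edges (go up some path and then down another), and each traversal step performs one call of Algorithm~\ref{alg:finding} costing $O(n+m)$. This yields the claimed delay of $O(n(n+m))$, matching the known bound for covariate adjustment.

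The only delicate point is guaranteeing that the very first output and the very last ``termination'' occur within the same delay bound; this is handled by the standard trick of printing the FD set found at each feasible node lazily (i.e., associated with the first leaf in its subtree), so that the work spent on internal nodes is charged against a subsequent output, and a single extra $O(n+m)$ call after the last output suffices to certify termination.
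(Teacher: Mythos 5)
Your proposal is correct and follows essentially the same route as the paper, which obtains the corollary by plugging the linear-time Algorithm~\ref{alg:finding} (used as an oracle for constrained instances $\bI \subseteq \bZ \subseteq \bR$) into the standard branch-and-prune enumeration framework of \citet{van2014constructing} as adapted by \citet{jeong2022finding}, yielding delay $O(n \cdot \mathrm{find}(n,m)) = O(n(n+m))$. Your write-up simply makes explicit the recursion-tree and delay-accounting details that the paper leaves to the cited references.
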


A linear-time delay enumeration algorithm appears to be out-of-reach because even for the
simpler tasks of enumerating d-separators and back-door adjustment sets, the
best known delay is again $O(n (n+m))$~\citep{van2014constructing}.

\newcolumntype{L}{>{$}l<{$}} 

\def\trueInTable{\text{always}}
\def\falseInTable{\text{---}}
\def\tcontinue{\textrm{continue}}
\def\tyield{\textrm{yield}}

\def\bZXY{\bZ_{\text{XY}}}
\def\bZXYP{\bZ^{\ref{alg:finding-minimal}}_{\text{XY}}}
\def\bZXYT{\bZ^{\ref{alg:finding:minimal:tabled}}_{\text{XY}}}
\def\bZZY{\bZ_{\text{ZY}}}
\def\bZZYP{\bZ^{\ref{alg:finding-minimal}}_{\text{ZY}}}
\def\bZZYT{\bZ^{\ref{alg:finding:minimal:tabled}}_{\text{ZY}}}

\section{Finding Minimal Front-Door\\ Adjustment Sets}
\label{section:minimal}
The method in the previous section guarantees us to find an FD set (if
it exists) in linear-time. The set it will return, however, is
the maximum size FD set, as the least amount of variables are pruned
from $\bR$ in order to satisfy condition FD(2) and FD(3). 
From a practical point-of-view using this
set for FD adjustment appears artificial and impedes the
evaluation of the FD formula.
In this section, we discuss the problem of finding FD
adjustment sets of small size. More precisely we aim to find \emph{minimal} FD
sets, that is, sets for which no proper  subset satisfies the FD
criterion. 
{We remark that minimal FD sets are not necessarily
  of minimum size. E.g., in graph (ii) in Fig.~\ref{fig:FD:examples}
  the FD set $\{B,C\}$ is minimal, but $\{A\}$ is also an FD set and
has smaller sizer.  It is an open problem to efficiently compute
minimum size FD sets.} 

The obvious algorithm for finding a minimal FD set is to find a
non-minimal set and remove vertices one-by-one until no more vertices can be
removed. This trivial approach has been successfully used to find
minimal back-door adjustment sets in polynomial time
\citep{van2019separators}, but it is not applicable to FD sets.
Figure~\ref{fig:FD:examples} (iii) shows a DAG with a front-door
adjustment set $\bZ=\{A, B, C\}$.  The BD path $B \gets D \to C
\to Y$ does not violate condition FD(3), since it is not proper (in
other words, $(\{A, B, C\} \indep \{Y\} \mid \{X\})_{G_{\underline{\{A,
B, C\}}}}$ holds as the path disconnects without the outgoing edges
from $C$, which are ignored in FD(3) when $C \in \bZ$). 
Removing $C$ from the FD set turns it into a
unblocked proper path, hence $\{A,B\}$ is not an FD set. Neither
is $\{A,C\}$ nor $\{B,C\}$. Since no single vertex can be removed from $\bZ$, one
might believe $\bZ$ to be minimal. However, the only minimal
FD set is $\{A\}$.

While it would be possible to use a modified version of this
strategy, by iteratively removing a variable $W$ from the non-minimal set $\bZ$ if there
exists an FD set with $\bR = \bZ \setminus \{W\}$, a statement which could be
checked for each variable using Algorithm~\ref{alg:finding}, this would yield a time
complexity of $O(n(n+m))$. In this section, we present a linear-time
$O(n+m)$ algorithm, which moreover reveals structural insights of
(minimal) FD sets. 
We begin with a formal definition:
\begin{definition}
An FD set $\bZ$ relative to $(\bX,\bY)$ is
$\bI$\emph{-inclusion  minimal}, if and only if, no proper subset $\bZ'\subset\bZ$ with $\bI\subseteq\bZ'$ is 
an FD set relative to $(\bX,\bY)$.
If $\bI = \emptyset$, we call the set \emph{(inclusion)
minimal}.\footnote{In the majority of cases, we only speak of
\emph{minimal FD sets} and omit the word inclusion for $\bI =
\emptyset$.}
\end{definition}

The following lemma allows us to characterize minimal FD sets:
\begin{lemma}\label{lem:characterization:minimal}
Let $G$ be a DAG and  $\bX$, $\bY$ be disjoint sets of vertices.
An FD set $\bZ$ relative to $(\bX,\bY)$ is $\bI$-inclusion  minimal  if it can be written as $\bZ = \bI \cup \bZXY \cup \bZZY$ such that
\begin{enumerate}
\item For each $Z\in\bZXY$, there exists a directed proper path from $\bX$ to $Z$ to $\bY$ containing no vertex of $\bZ\setminus \{Z\}$, and
\item for each $Z\in\bZZY$, there exists a proper BD way from
  $\bI\cup\bZXY$ to $\bY$ containing an edge $Z \to$  (i.e., an
  edge facing in the direction of $\bY$) but no edge $Z' \to$ for $Z' \in \bZ\setminus \{Z\}$ which contains no vertex of $\bX$ and each collider is opened by $\bZ$. 
\end{enumerate}
\end{lemma}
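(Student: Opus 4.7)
The plan is to prove the sufficient direction stated: if $\bZ$ can be decomposed as $\bI\cup\bZXY\cup\bZZY$ satisfying properties~(1) and~(2), then $\bZ$ is $\bI$-inclusion minimal. I will fix any $W\in(\bZXY\cup\bZZY)\setminus\bI$, set $\bZ':=\bZ\setminus\{W\}$, and exhibit a witness showing $\bZ'$ is not an FD set relative to $(\bX,\bY)$. The argument splits along whether $W$ is supplied by condition~(1) or condition~(2).

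For $W\in\bZXY$, condition~(1) yields a directed proper path $\pi$ from some $X'\in\bX$ through $W$ to some $Y'\in\bY$ whose only $\bZ$-vertex is $W$. Removing $W$ leaves $\pi$ with no vertex of $\bZ'$, so $\bZ'$ fails to intercept the causal path $\pi$, violating FD(1).

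For $W\in\bZZY$, condition~(2) yields a proper BD way $\pi$ from some $V\in\bI\cup\bZXY$ to $\bY$ carrying an edge $W\to$, no edge $Z'\to$ for $Z'\in\bZ\setminus\{W\}$, no vertex of $\bX$, and with every collider opened by $\bZ$. By disjointness of $\bI\cup\bZXY$ from $\bZZY$, $V\neq W$, hence $V\in\bZ'$. The plan is then to inspect $\pi$ inside $G_{\underline{\bZ'}}$ and verify: (i) the only outgoing $\bZ$-edges on $\pi$ emanate from $W$, and since $W\notin\bZ'$, every edge of $\pi$ survives; (ii) every non-collider of $\pi$ other than $W$ lies outside $\bZ$, else it would contribute a forbidden $Z'\to$ edge, so together with $\pi$'s avoidance of $\bX$ none of these non-colliders blocks $\pi$ in $G_{\underline{\bZ'}}$ given $\bX$; (iii) the colliders on $\pi$ are distinct from $W$ (which is a non-collider with an outgoing edge on $\pi$) and remain activated after removing only $W$ from $\bZ$. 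Invoking the way/path equivalence of Appendix~\ref{appendix:bb}, $\pi$ then certifies an open proper BD connection from $\bZ'$ to $\bY$ given $\bX$ in $G_{\underline{\bZ'}}$, contradicting FD(3) for $\bZ'$.

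The main obstacle I anticipate is item (iii): cleanly translating ``each collider is opened by $\bZ$'' into openness of the relevant way in $G_{\underline{\bZ'}}$ under conditioning on $\bX$, particularly because $\bX$ and $\bZ$ are distinct sets while the definition of an open way in Sec.~\ref{sec:prel} requires colliders to lie in the conditioning set. The cleanest resolution is to pick $\pi$ minimally (shortest among valid witnesses) and to track collider activations via their descendants in $\bZ$ surviving the edge removal in $G_{\underline{\bZ'}}$. Should the lemma also be intended as a characterization, the converse is parallel: for each $Z\in\bZ\setminus\bI$, minimality forces a failure of FD(1) or FD(3) (FD(2) being monotone under passage to subsets) for $\bZ\setminus\{Z\}$, and a shortest witness places $Z$ in $\bZXY$ or $\bZZY$ as required.
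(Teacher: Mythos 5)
There is a genuine gap in the quantification over subsets. You fix a single vertex $W\in(\bZXY\cup\bZZY)\setminus\bI$ and show that $\bZ\setminus\{W\}$ is not an FD set. But $\bI$-inclusion minimality requires that \emph{no} proper subset $\bZ'\subset\bZ$ with $\bI\subseteq\bZ'$ is an FD set, and because the FD criterion is non-monotone this is strictly stronger than passing the singleton-removal test. The paper's own Fig.~\ref{fig:FD:examples}(iii) is exactly the counterexample to your proof strategy: $\{A,B,C\}$ is an FD set, none of $\{A,B\}$, $\{A,C\}$, $\{B,C\}$ is an FD set, yet $\{A\}$ is, so $\{A,B,C\}$ survives every single-vertex deletion while failing to be minimal. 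An argument that only rules out the maximal proper subsets therefore cannot establish the lemma.

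The conditions of the lemma are in fact strong enough to handle arbitrary subsets, and your path analysis is essentially the right one once the quantifier is fixed. Take an arbitrary $\bZ'\subset\bZ$ with $\bI\subseteq\bZ'$ and pick $Z\in\bZ\setminus\bZ'$. If some such $Z$ lies in $\bZXY$, condition~1 gives a causal path witnessing the failure of FD(1), exactly as in your first case. Otherwise $\bZXY\subseteq\bZ'$ and $Z\in\bZZY$; now the point of the clause ``no edge $Z'\to$ for $Z'\in\bZ\setminus\{Z\}$'' is that the witness way $\pi$ for $Z$ is robust to the removal of \emph{any} collection of further vertices, not just $Z$ itself: the last vertex $Z_l$ of $\bZ'$ on $\pi$ (which exists since $\pi$ starts in $\bI\cup\bZXY\subseteq\bZ'$) must occur with an incoming edge, so the sub-way of $\pi$ from $Z_l$ to $\bY$ survives intact in $G_{\underline{\bZ'}}$ and is an unblocked proper BD way, violating FD(3) for $\bZ'$. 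Your steps (i)--(iii) carry over to this setting with little change; the missing idea is simply to run them against the tail of $\pi$ beyond the last remaining $\bZ'$-vertex rather than against $\pi$ with only $W$ deleted.
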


\begin{proof}
Suppose there exists a smaller FD set $\bZ' \subset \bZ$ with $\bI\subseteq \bZ'$.  
Let $Z$ be a vertex of $\bZ \setminus \bZ'$. 
If $Z$ is in $\bZXY$, $\bZ'$ does not block all directed paths from $\bX$ to $\bY$ and is no FD set.

So $\bZXY \subseteq \bZ'$ and $Z$ is in $\bZZY$. 
Let $\pi$  be the BD way from $\bI \cup \bZXY$ to $\bY$
containing the edge $Z \to$ from point~2. 
Let $Z_l$ be the last vertex of $\bZ'$ on $\pi$. It exists since $\pi$ starts in $\bI\cup\bZXY\subseteq\bZ'$. It occurs with an incoming edge $Z_l\gets$ since $Z\to$ is the only outgoing edge $\bZ\to$ on $\pi$. 
So the sub-way 
of $\pi$ from $Z_l$ to $\bY$ exists in $G_{\underline{\bZ'}}$, is a BD way, and is not blocked by $\bX$.
Hence, $\bZ'$ is no FD set.
\end{proof}

The intuition behind Lemma~\ref{lem:characterization:minimal} is to
include only vertices that are necessary for identifying the effect of
$\bX$ on $\bY$. The set $\bZXY$ is needed to block causal paths
between $\bX$ and $\bY$, while the set $\bZZY$ is needed to
disconnect BD paths from $\bZXY$ to $\bY$.
The conditions are chosen, such that each path is blocked by exactly
one vertex, so no vertex from $\bZXY \cup \bZZY$ can be
removed from the FD set without opening a path.
Algorithm~\ref{alg:finding-minimal} shows how sets $\bZXY$
and $\bZZY$ can be computed, by first calling
Algorithm~\ref{alg:finding} and constructing $\bZ_{\text{An}}$ (defined in the
algorithm).

\IncMargin{.5em}
\begin{algorithm}[btp]
  \DontPrintSemicolon
  \SetKwInOut{Input}{input}\SetKwInOut{Output}{output}
  \Indmm
  \Input{A DAG $G = (\bV,\bE)$ and sets $\bX$, $\bY$, $\bI\subseteq\bR\subseteq\bV$.}
  \Output{Minimal FD set $\bZ_{\mathrm{min}}$ with $\bI\subseteq \bZ_{\mathrm{min}} \subseteq\bR$ or $\bot$ if no FD set exists.}
  \Indpp
  \SetKwFunction{FVisit}{visit}
  \SetKwProg{Fn}{function}{}{end}

 \BlankLine 

Compute the set $\bZ_{(\mathrm{ii})}$ with Algorithm~\ref{alg:finding}; If  it outputs $\bot$, then return $\bot$ and stop.

Let $\bZ_{\text{An}} \subseteq \bZii \cap \An(\bY) $ be a maximal set such that each $V \in \bZ_{\text{An}}$ is a parent of $\bY$ or there exists a directed path from $V$ to $\bY$ 
containing no vertex of $\bX\cup (\bZii\setminus \{V\})$.

Let $\bZXY \subseteq \bZ_{\text{An}} \cap \De(\bX)$ be a maximal set such that, for each $V \in \bZXY$, there exists a directed proper path from $\bX$ to $V$ 
containing no vertex of $\bI\cup(\bZ_{\text{An}}\setminus \{V\})$. 

Let $\bZZY \subseteq \bZ_{\text{An}}$ be a maximal set such that, for
each $V \in \bZZY$, there exists a proper BD way from $\bI\cup\bZXY$ to $V$ containing no edge $Z_{\text{An}}\to$ with $Z_{\text{An}}\in\bI\cup\bZ_{\text{An}}$ and no vertex of $\bX$, and all colliders are in $\bI\cup\bZ_{\text{An}}$.  

\Return $\bZ_{\mathrm{min}} := \bI\cup \bZXY \cup \bZZY$

  \caption{Finding an \emph{$\bI$-inclusion minimal} front-door
  adjustment set $\bZ$ 
  in linear time.}
  \label{alg:finding-minimal}
\end{algorithm}
\DecMargin{.5em}

\begin{theorem}\label{thm:finding:minimal}
Given a DAG $G$ and sets $\bX$ and $\bY$,
Algorithm~\ref{alg:finding-minimal}
finds an  $\bI$-inclusion minimal FD set $\bZ$ relative to $(\bX,\bY)$, with $\bI\subseteq\bZ\subseteq\bR$, or decides that such a set does not exist, in linear time.

\end{theorem}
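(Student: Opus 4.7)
My plan has three parts: checking that the algorithm correctly returns $\bot$ iff no FD set exists and otherwise returns an FD set, establishing $\bI$-inclusion minimality via Lemma~\ref{lem:characterization:minimal}, and bounding the running time. If Algorithm~\ref{alg:finding} returns $\bot$ then no FD set exists by Theorem~\ref{thm:findcorrect}, so returning $\bot$ is correct. Otherwise $\bZii$ is an FD set containing $\bI$, and I will show that $\bZ_{\mathrm{min}}=\bI\cup\bZXY\cup\bZZY$ is also an FD set; $\bI\subseteq\bZ_{\mathrm{min}}\subseteq\bR$ is immediate since $\bZXY,\bZZY\subseteq\bZii\subseteq\bR$, and FD(2) follows because every $V\in\bZ_{\mathrm{min}}\subseteq\bZ_{(\mathrm{i})}$ individually satisfies $(V\indep\bX)_{G_{\underline{\bX}}}$, a property which lifts to unions.

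For FD(1), I take an arbitrary proper directed path $P$ from $\bX$ to $\bY$ (it suffices to block these, since any non-proper directed path has a proper subpath from its last $\bX$-vertex to $\bY$). Let $W$ be the last $\bZii$-vertex on $P$; the suffix of $P$ from $W$ onward is a directed path avoiding $\bX$ (which $P$ visits only at its start) and avoiding $\bZii\setminus\{W\}$, so $W\in\bZ_{\text{An}}$. Then let $V$ be the first vertex on $P$ belonging to $\bI\cup\bZ_{\text{An}}$, which exists because $W$ qualifies. If $V\in\bI$, blocking is clear; otherwise $V\in\bZ_{\text{An}}\cap\De(\bX)$ and the prefix of $P$ from $\bX$ to $V$ is a directed proper path avoiding $\bI\cup(\bZ_{\text{An}}\setminus\{V\})$, so $V\in\bZXY$ by the maximality of $\bZXY$.

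The main obstacle will be FD(3), because non-monotonicity of the FD criterion permits proper BD ways to open in $G_{\underline{\bZ_{\mathrm{min}}}}$ that were blocked in $G_{\underline{\bZii}}$. I would assume toward contradiction an open proper BD way $\pi$ from some $V\in\bZ_{\mathrm{min}}$ to $\bY$ given $\bX$ in $G_{\underline{\bZ_{\mathrm{min}}}}$. Every intermediate vertex of $\pi$ lying in $\bZii\setminus\bZ_{\mathrm{min}}$ must be a non-collider (colliders on an open way lie in $\bX$, but $\bZii\cap\bX=\emptyset$) and hence lies in $\bZii\setminus\bZ_{\text{An}}$. I would then traverse $\pi$ and exploit the defining failure of $\bZ_{\text{An}}$ at each such vertex $W$ -- every directed path from $W$ to $\bY$ meets $\bX\cup(\bZii\setminus\{W\})$ -- combined with the remainder of $\pi$, to produce either a proper BD way in $G_{\underline{\bZii}}$ contradicting FD(3) for $\bZii$, or a vertex whose omission from $\bZ_{\text{An}}$ or $\bZZY$ contradicts the maximality baked into the algorithm.

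For minimality, I apply Lemma~\ref{lem:characterization:minimal} to the decomposition $\bZ_{\mathrm{min}}=\bI\cup\bZXY\cup\bZZY$. Condition~1 follows by concatenating, for each $Z\in\bZXY$, the directed proper path from $\bX$ to $Z$ supplied by the definition of $\bZXY$ with the directed path from $Z$ to $\bY$ guaranteed by $Z\in\bZ_{\text{An}}$; both pieces avoid $\bZ_{\mathrm{min}}\setminus\{Z\}\subseteq\bI\cup(\bZ_{\text{An}}\setminus\{Z\})$, using $\bI\subseteq\bZii$. Condition~2 is obtained by extending the proper BD way from $\bI\cup\bZXY$ to $Z\in\bZZY$ via the directed path from $Z$ to $\bY$; the subtle point is that any collider of the BD way lies in $\bI\cup\bZ_{\text{An}}$ by construction, and in fact must lie in $\bZZY$ by maximality, because the prefix of the way up to that collider again satisfies all conditions defining $\bZZY$. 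Finally, linearity follows because Algorithm~\ref{alg:finding} runs in $O(n+m)$ by Theorem~\ref{thm:findcorrect} and each of $\bZ_{\text{An}}$, $\bZXY$, $\bZZY$ can be computed by a single Bayes-Ball-style traversal in $O(n+m)$: $\bZ_{\text{An}}$ via reverse search from $\bY$ through $\bV\setminus(\bX\cup\bZii)$ collecting $\bZii$-parents of visited vertices, $\bZXY$ via forward search from $\bX$ through $\bV\setminus(\bI\cup\bZ_{\text{An}})$ collecting $\bZ_{\text{An}}$-children, and $\bZZY$ via a Bayes-Ball traversal from $\bI\cup\bZXY$ honoring the prescribed collider and edge-removal restrictions, analogously to Algorithm~\ref{alg:findingzab}.
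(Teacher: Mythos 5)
Your overall architecture is the right one and matches the paper's: return $\bot$ iff Algorithm~\ref{alg:finding} does, verify FD(1)--FD(3) for $\bZ_{\mathrm{min}}=\bI\cup\bZXY\cup\bZZY$, invoke Lemma~\ref{lem:characterization:minimal} for minimality, and implement each set by a single linear-time traversal. Your FD(1) argument (last $\bZii$-vertex on a proper causal path lies in $\bZ_{\text{An}}$ by maximality, then the first $\bI\cup\bZ_{\text{An}}$-vertex lies in $\bI\cup\bZXY$ by maximality) is correct and in fact more explicit than the paper's one-line claim; FD(2), the verification of conditions 1 and 2 of Lemma~\ref{lem:characterization:minimal} (including the observation that colliders of the witnessing BD way land in $\bZZY$ by maximality), and the runtime discussion are all sound and essentially as in the paper.

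The genuine gap is FD(3), which you yourself flag as the main obstacle but then only outline rather than prove, and the one concrete step you do take is wrong: from ``every intermediate vertex of $\pi$ in $\bZii\setminus\bZ_{\mathrm{min}}$ is a non-collider'' you conclude ``hence it lies in $\bZii\setminus\bZ_{\text{An}}$,'' which does not follow --- nothing prevents a vertex of $\bZ_{\text{An}}\setminus\bZ_{\mathrm{min}}$ from appearing as a non-collider on such a way, and indeed the whole difficulty is that such a vertex could carry the outgoing edge whose removal in $G_{\underline{\bZii}}$ (but not in $G_{\underline{\bZ_{\mathrm{min}}}}$) was blocking $\pi$. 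The working argument runs in the opposite direction: since $\bZii$ satisfies FD(3), the hypothetical open proper BD way $\pi$ from $\bZ_{\mathrm{min}}$ to $\bY$ in $G_{\underline{\bZ_{\mathrm{min}}}}$ must use an edge $Z\to$ with $Z\in\bZii\setminus\bZ_{\mathrm{min}}$; one first shows $\pi$ has no colliders and no $\bX$-vertices (a collider opened by $\bX$ would yield a violation of FD(2)), so everything after $Z$ is directed towards $\bY$ and hence $\pi$ carries an edge $Z_{\text{An}}\to$ with $Z_{\text{An}}\in\bZ_{\text{An}}$; taking the \emph{first} such edge and prepending, if necessary, the BD way witnessing membership of $\pi$'s start vertex in $\bZZY$, one exhibits exactly the kind of way that forces $Z_{\text{An}}\in\bZZY\subseteq\bZ_{\mathrm{min}}$ by maximality, so the edge $Z_{\text{An}}\to$ is in fact absent from $G_{\underline{\bZ_{\mathrm{min}}}}$ --- a contradiction. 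Your proposed plan of ``exploiting the defining failure of $\bZ_{\text{An}}$ at each such vertex'' does not lead to this chain, and since FD(3) is precisely where the non-monotonicity of the FD criterion (Fig.~\ref{fig:FD:examples}(iii)) bites, the proof cannot be considered complete without it.
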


\begin{figure}
  \centering
  \begin{tikzpicture}[xscale=1.25]
      \node (x) at (5,0) {$X$};
      \node (a) at (6,0) {$A$};
      \node (b) at (7,0) {$B$};
      \node (c) at (8,0) {$C$};
      \node (y) at (9,0) {$Y$};
      \node (d) at (7,-1) {$D$};
      \node (e) at (8.25,-1) {$E$};
      \node (u1) at (7,1) {$U_1$};
      \node (u2) at (6,-1) {$U_2$};
      \graph[use existing nodes, edges = {arc}] {
        x -- a;
        a -- b;
        b -- c;
        c -- y;
        d -- y;
        e -- d;
        e -- y;
        u2 -- a;
        u2 -- d;
        u1 -- x;
      };
      \draw[arc] (b) to[bend left] (y);
      \draw[arc] (u1) to[bend left=23] (y);
  \end{tikzpicture}
  \caption{Example graph for finding a minimal FD set.}
  \label{fig:example:min}
\end{figure}
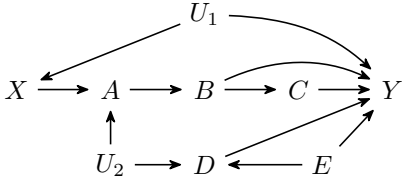
These concepts are exemplified in Fig.~\ref{fig:example:min},
where the goal is to find a minimal FD set with respect to $X$ and~$Y$
with $\bR = \{A, B, C, D, E\}$ and $\bI = \emptyset$.
The FD set
$\bZ_{(\mathrm{ii})} = \{A, B, C, D, E\}$ is returned by
Algorithm~\ref{alg:finding}. Vertices $B$, $C$, $D$, and $E$ are in
$\bZ_{\text{An}}$ as they are parents of $\bY$ and in $\bZ_{(\mathrm{ii})}$. Based on
this set, we first have $\bZXY = \{B\}$ as every path
from $X$ to $C$ goes through $B$ and there is no path to $D$ and $E$.
Moreover, $\bZZY = \{D, E\}$ as both are needed to block
BD paths from $B$ to $Y$. Notably, $D$ alone does not suffice as there would be
the BD path $D \leftarrow E \rightarrow Y$ violating FD(3), i.e., the
statement $(\bZ \indep \bY \mid \bX)_{G_{\underline{\bZ}}}$. By
including $E$, this condition is satisfied because outgoing edges from
variables in $\bZ$ are removed in $G_{\underline{\bZ}}$.
Hence $\bZ_{\mathrm{min}} = \bZXY \cup \bZZY = \{B,
D, E\}$ is a minimal FD set.

From this theorem, we can conclude that the conditions of Lemma~\ref{lem:characterization:minimal} are  "if and only if" conditions:

\begin{corollary}\label{lem:characterization:minimal:reverse}
An FD set $\bZ$ relative to $(\bX, \bY)$ is
$\bI$-inclusion minimal if \emph{and only if} it can be written as 
$\bZ = \bI \cup \bZXY \cup \bZZY$ such that
$\bZXY$ and $\bZZY$ satisfy conditions 1. and 2. given in
Lemma~\ref{lem:characterization:minimal}.
\end{corollary}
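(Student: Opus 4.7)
The ``if'' direction is exactly Lemma~\ref{lem:characterization:minimal}, so only the converse requires proof. The plan is to feed the given $\bI$-inclusion minimal set $\bZ$ back into Algorithm~\ref{alg:finding-minimal} as the restriction set $\bR$, and use Theorem~\ref{thm:finding:minimal} to conclude that the algorithm must return $\bZ$ itself in the desired decomposed form.

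First I would check that taking $\bR := \bZ$ leaves the candidate pool intact, i.e.\ $\bZ_{(\mathrm{ii})} = \bZ$. Since $\bZ$ satisfies FD(2), every $Z \in \bZ$ lies in $\bZ_{(\mathrm{i})}$; since $\bZ$ satisfies FD(3), the witness $\bS := \bZ \setminus \{Z\} \subseteq \bZ_{(\mathrm{i})}$ places each $Z$ in $\bZ_{(\mathrm{ii})}$. By Theorem~\ref{thm:finding:minimal}, the algorithm then outputs an $\bI$-inclusion minimal FD set $\bZ_{\min} = \bI \cup \bZXY \cup \bZZY$ with $\bI \subseteq \bZ_{\min} \subseteq \bZ$. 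Because $\bZ$ is itself $\bI$-inclusion minimal, no proper FD subset of $\bZ$ containing $\bI$ exists, forcing $\bZ_{\min} = \bZ$. This already exhibits the claimed decomposition $\bZ = \bI \cup \bZXY \cup \bZZY$.

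The remaining—and technically more delicate—step is to translate the algorithm's internal conditions on $\bZXY$ and $\bZZY$, which are phrased relative to $\bZ_{\text{An}}$ and terminate at $V$, into conditions 1 and 2 of Lemma~\ref{lem:characterization:minimal}, which are phrased relative to $\bZ$ and terminate at $\bY$. For condition~1, given $V \in \bZXY$ I would concatenate the algorithm's directed proper path from $\bX$ to $V$ (avoiding $\bI \cup (\bZ_{\text{An}} \setminus \{V\})$) with the directed path from $V$ to $\bY$ supplied by the definition of $\bZ_{\text{An}}$ (avoiding $\bX \cup (\bZ_{(\mathrm{ii})} \setminus \{V\})$). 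Acyclicity of $G$ prevents the two directed paths from sharing vertices other than $V$, and the inclusion $\bZ \setminus \{V\} \subseteq (\bI \cup \bZ_{\text{An}}) \setminus \{V\} \subseteq \bZ_{(\mathrm{ii})} \setminus \{V\}$ ensures the combined path avoids all of $\bZ \setminus \{V\}$.

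For condition~2, given $V \in \bZZY$ I would analogously extend the algorithm's proper BD way from $\bI \cup \bZXY$ to $V$ by the directed path from $V$ to $\bY$. The prohibition on any edge $Z_{\text{An}} \to$ with $Z_{\text{An}} \in \bI \cup \bZ_{\text{An}}$ forces the last edge of the BD part to be incoming to $V$, so $V$ becomes a non-collider on the concatenated way and supplies the edge $V \to$ required by condition~2. The main obstacle here, and the most subtle point of the proof, is to argue that each collider $C$ on the BD part—known by the algorithm only to lie in $\bI \cup \bZ_{\text{An}}$—actually lies in $\bZ$: the sub-way from $\bI \cup \bZXY$ to $C$ inherits every defining property of $\bZZY$ (proper, no $Z_{\text{An}} \to$ edge, no vertex of $\bX$, colliders in $\bI \cup \bZ_{\text{An}}$), so \emph{maximality} of $\bZZY$ forces $C \in \bZZY \subseteq \bZ$, meaning $C$ is indeed opened by $\bZ$. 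Combined with the fact that $\pi_2$ contributes no colliders and avoids $\bX \cup (\bZ \setminus \{V\})$, this completes the verification and hence the corollary.
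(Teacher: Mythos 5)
Your proof is correct and follows essentially the same route as the paper: run Algorithm~\ref{alg:finding-minimal} with $\bR := \bZ$ and use $\bI$-inclusion minimality of $\bZ$ to force the returned set to equal $\bZ$, which then carries the decomposition of Lemma~\ref{lem:characterization:minimal}. You additionally spell out details the paper leaves implicit (that $\bZ_{(\mathrm{ii})} = \bZ$, and the translation of the algorithm's conditions on $\bZ_{\text{An}}$, $\bZXY$, $\bZZY$ into conditions 1 and 2, which the paper delegates to the proof of Theorem~\ref{thm:finding:minimal}), and these verifications are sound.
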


\begin{proof}
The "if"-direction is Lemma~\ref{lem:characterization:minimal}. The reverse follows from the algorithm  of Theorem~\ref{thm:finding:minimal}.
For a given set $\bZ$, apply the algorithm 
with $\bR=\bZ$. The algorithm returns a minimal set $\bZ'\subseteq\bZ$
that satisfies the conditions of
Lemma~\ref{lem:characterization:minimal}. If $\bZ$ is minimal,
$\bZ=\bZ'$ and $\bZ$ satisfies the conditions, too.
\end{proof}

\begin{corollary}
An $\bI$-inclusion  minimal FD set $\bZ$ relative to $(\bX,\bY)$ is a subset of $\bI\cup\An(\bY)$.
\end{corollary}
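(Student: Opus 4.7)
The plan is to avoid any direct structural induction over the BD way supplied by Lemma~\ref{lem:characterization:minimal} and instead use Algorithm~\ref{alg:finding-minimal} itself as a certificate.

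First, I would take an arbitrary $\bI$-inclusion minimal FD set $\bZ$ relative to $(\bX,\bY)$ and feed it back into the algorithm by running Algorithm~\ref{alg:finding-minimal} on the same $G,\bX,\bY,\bI$ but with $\bR := \bZ$. By Theorem~\ref{thm:finding:minimal}, the call returns an $\bI$-inclusion minimal FD set $\bZ_{\mathrm{min}}$ with $\bI \subseteq \bZ_{\mathrm{min}} \subseteq \bZ$ (it cannot return $\bot$, since $\bZ$ itself already witnesses that an FD set with $\bI \subseteq \bZ \subseteq \bR$ exists). Because $\bZ$ is $\bI$-inclusion minimal and $\bZ_{\mathrm{min}}$ is an FD set sandwiched between $\bI$ and $\bZ$, minimality of $\bZ$ forces $\bZ_{\mathrm{min}} = \bZ$.

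Next, I would read the desired inclusion off from the explicit form of the algorithm's output: $\bZ_{\mathrm{min}} = \bI \cup \bZXY \cup \bZZY$, where by construction both $\bZXY$ and $\bZZY$ are subsets of the intermediate set $\bZ_{\text{An}}$. By the very definition of $\bZ_{\text{An}}$ in Algorithm~\ref{alg:finding-minimal}, every $V \in \bZ_{\text{An}}$ is either a parent of $\bY$ or the starting point of a directed path to $\bY$, and hence $\bZ_{\text{An}} \subseteq \An(\bY)$. Chaining these inclusions gives $\bZ = \bZ_{\mathrm{min}} \subseteq \bI \cup \bZ_{\text{An}} \subseteq \bI \cup \An(\bY)$.

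The main obstacle — and the reason I would route through the algorithm rather than attempt a direct proof from Corollary~\ref{lem:characterization:minimal:reverse} — is handling $\bZZY$: for $Z \in \bZXY$ condition~1 of Lemma~\ref{lem:characterization:minimal} already supplies a directed path through $Z$ to $\bY$, but for $Z \in \bZZY$ one only has a BD way with an edge $Z \to$, and turning this into evidence that $Z \in \An(\bY)$ would require carefully managing collider detours (using the "no edge $Z' \to$ for $Z' \in \bZ\setminus\{Z\}$" clause) and the fact that a way may contain a vertex twice. The algorithmic route sidesteps this entirely because $\bZZY$ is drawn from $\bZ_{\text{An}}$ from the outset, so ancestry to $\bY$ is baked in.
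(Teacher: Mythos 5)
Your proposal is correct and follows essentially the route the paper intends: the paper leaves this corollary without explicit proof, but its proof of Corollary~\ref{lem:characterization:minimal:reverse} uses exactly your device of re-running Algorithm~\ref{alg:finding-minimal} with $\bR=\bZ$ and invoking minimality to conclude $\bZ_{\mathrm{min}}=\bZ$, after which the inclusion $\bZ = \bI\cup\bZXY\cup\bZZY \subseteq \bI\cup\bZ_{\text{An}} \subseteq \bI\cup\An(\bY)$ is immediate from the algorithm's definition of $\bZ_{\text{An}}\subseteq\bZii\cap\An(\bY)$. Your observation that this sidesteps the delicate collider-handling needed to extract ancestry from condition~2 of Lemma~\ref{lem:characterization:minimal} is a fair and accurate justification for the algorithmic detour.
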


As we show empirically in Appendix~\ref{appendix:further:experiments}, the minimal FD sets returned
by Algorithm~\ref{alg:finding-minimal} are often much smaller compared
to the
maximal FD set. However, we reemphasize that they are not
guaranteed to have \emph{minimum} size, i.e., there might exist an
FD set of smaller cardinality, and that it is open whether
it is possible to
compute such sets efficiently.

\section{Experiments}
We have shown that the run time of the algorithms proposed in this work
scales linearly in the size of the graph. In this
section, we empirically show that this translates to practical
implementations, which are able to handle hundreds of thousands of
variables. 

\begin{figure*}
  \begin{center}
    \includegraphics[width=\textwidth]{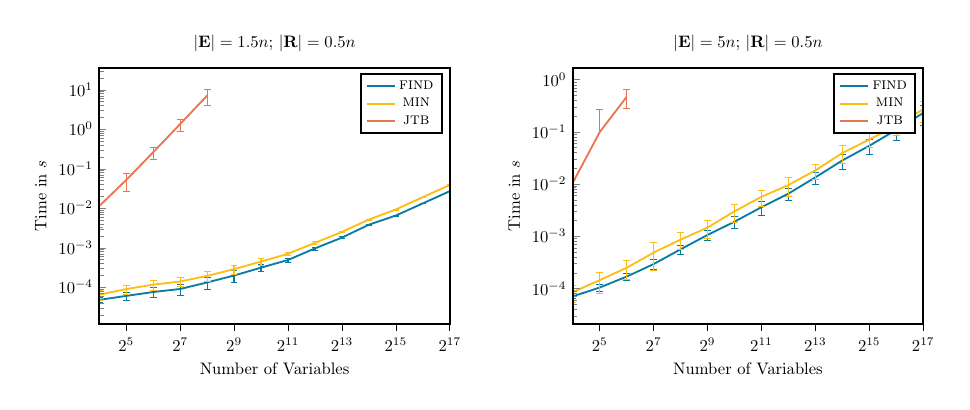}
\end{center}
  \caption{
    Log-Log plot of the average run time in seconds for
    \citep{jeong2022finding} (\textsc{jtb}),
    Algorithm~\ref{alg:finding} (\textsc{find}) and
    Algorithm~\ref{alg:finding-minimal} (\textsc{min})
    on Erd\H{o}s-R\'enyi graphs with $1.5n$ (left) and $5n$
    (right) edges, corresponding to expected vertex degree $3$ and $10$. 
    $|\bX|, |\bY|$ are random integers between $1$ and $3$;
    $|\bI| = 0$ and $|\bR| =
  0.5n$. For each choice of $n$, we
average over 50 graphs. 
}
  \label{fig:experiments}
\end{figure*}

We implement our methods in Python, Julia, and JavaScript to enable wide
practical usage in the causal inference community.\footnote{To
  facilitate adoption, we
  base on the \texttt{networkx} Python package, which is the
  graph library used by causal inference packages
  such as \texttt{DoWhy}~\citep{dowhy}. Similarly, in Julia we
  allow for easy integration into 
  \texttt{CausalInference.jl}~\citep{causalinferencejl} and in Javascript into the
  \texttt{DAGitty} environment~\citep{textor2016robust}.
} 
In the main paper, we show the run time for the Python implementation
and compare it to the author's implementation
of the algorithm for finding FD sets given
in~\citet{jeong2022finding} (\textsc{jtb}). In
Appendix~\ref{appendix:further:experiments}, we also
evaluate our Julia and Javascript implementations and provide more detailed results.
We ran the experiments on a single core of the AMD
Ryzen Threadripper 3970X 32-core processor on a 256GB
RAM machine.\footnote{The experiments can be replicated within a few
  days on a desktop
computer without any problems.} 
Figure~\ref{fig:experiments} shows the average run time of the
algorithms in seconds. Per instance, each algorithm was given a time
limit of 30 seconds (we only report results for parameter choices for
which each instance was solved within the allocated time).

The results confirm our theoretical findings as \textsc{find}
(Algorithm~\ref{alg:finding}) outperforms \textsc{jtb} by a factor of
more than $10^4$ on medium-sized instances (the gap widens with the number of
variables). For the sparser graphs (expected degree $3$), \textsc{jtb}
does not terminate in under 30 seconds in case of more than
256 vertices; for denser graphs (expected degree $10$) the performance
degrades faster and only the instances up to $64$ vertices are solved
within the time limit. In the Appendix~\ref{appendix:further:runtimes}, we also provide
a run-time comparison for the real-life DAGs from the
\texttt{bnlearn} repository~\cite{scutaribnlearn}, which further validate our findings, showing that
\textsc{find} yields a sizeable improvement even for  smaller graphs.  

These gains also translate to \textsc{min} which has merely a slightly higher
run time compared to \textsc{find}, while giving significantly smaller FD
sets, as we report in more detail in
Appendix~\ref{appendix:further:experiments}. In particular, the FD
sets returned by \textsc{find} and \textsc{jtb} grow linearly in the
number of vertices, even when \textsc{min} computes FD sets of size
zero or one.

\section{Conclusions}
We have developed efficient algorithms for solving several tasks 
related to estimation of causal effects via FD adjustment, 
including finding a minimal FD set in asymptotically optimal run
time. 
Our work shows that, from the algorithmic 
perspective, these tasks are not harder than for the celebrated back-door (BD)
adjustment. We also offer implementations of our algorithms which 
significantly outperform previous methods allowing practical usage in 
empirical studies even for very large
instances.\footnote{The implementations and code to reproduce the
  experiments are available at \url{https://github.com/mwien/frontdoor-adjustment}.} 


An important open problem is whether \emph{minimum size} FD sets can be
computed efficiently. For back-door adjustment, this is the
case as there exists an $O(n^3)$ algorithm \citep{van2019separators},
while it is not clear if a polynomial-time algorithm exists in the FD
case as well.
Moreover, throughout this work, we assume that the causal DAG is known and correctly specified 
by the researcher. It would 
be interesting to combine our approach with causal discovery methods
by developing algorithms for causal models representing a class
of DAGs,  e.g., a Markov equivalence class, instead of a single one. 
In such settings, the results could provide a means to construct more ``robust'' 
front-door adjustment sets if there are several options for DAGs which can 
be learned from data.

Another issue is that FD adjustment is still not as thoroughly understood as
covariate adjustment, for which a \emph{complete} graphical characterization
exists~\citep{ShpitserVR2010}, such that a set $\bZ$ can be used to
compute the causal effect of $\bX$ on $\bY$ using the covariate
adjustment formula if, \emph{and only if}, it satisfies the adjustment
criterion. For the FD criterion, the analogous statement does not hold and we
consider it an important direction for future work to extend 
it and our algorithms in this way. 
Finally, in the classical FD criterion it has to hold that there is no
BD path from $\bX$ to $\bZ$. In practice, this can be rather
restrictive. 
Recent works have relaxed this
assumption~\citep{hunermund2019causal,fulcher2020robust} 
by instead demanding that a set $\bW$ exists, which blocks
all such BD paths (FD(3) is generalized in the same way) and this
is also an interesting avenue for future research.

\section*{Acknowledgements}
This work was supported by the Deutsche Forschungsgemeinschaft (DFG)
grant 471183316 (ZA 1244/1-1).

\bibliography{main.bib}

\clearpage

\appendix

\onecolumn

\begin{center}
  \textbf{\huge Appendix}
\end{center}
\vspace*{1cm}

\setcounter{algocf}{3}
\setcounter{lemma}{3}
\setcounter{figure}{4}

\section{Introduction to the Bayes-Ball Algorithm}
\label{appendix:bb}
In the main paper, we often utilize the fact that testing d-separation
statements and similar tasks can be performed in linear-time. E.g., in
Lemma~\ref{lem:find:z1} it is necessary to find all variables
d-separated from $X$ in the graph with outgoing edges from $X$
removed. The underlying algorithm to achieve such results is the
well-known Bayes-Ball Algorithm~\citep{Shachter98}.

As this algorithm plays a major role this work and the more involved algorithms (such
as Algorithm~\ref{alg:findingzab} and~\ref{alg:general:visit}) are
conceptually build on top of it, we give a brief summary.

The main idea is similar to a classical graph search (in this work, we
use algorithms stylistically close to depth-first-search (DFS)).
The standard DFS is started at a certain vertex (say $X$) and a
recursive visit function is called for every neighbor, which has not
been visited before.

For comparison, we briefly state DFS in
Algorithm~\ref{alg:findingdfs} for directed graphs
under the classical definition of a path (causal paths in causal
inference terminology) and for a set of vertices $\bX$.
\begin{algorithm}
  \DontPrintSemicolon
  \SetKwInOut{Input}{input}\SetKwInOut{Output}{output}
  \Input{A DAG $G = (\bV,\bE)$ and set of vertices $\bX$.}
  \Output{Set of vertices reachable from $\bX$ via directed paths in $G$.}
  \SetKwFunction{FVisit}{visit}
  \SetKwProg{Fn}{function}{}{end}

  Initialize $\mathrm{visited}[V]$ with \texttt{false} for all $V \in \bV$. \;
  \BlankLine
  \Fn{\FVisit{$G$, $V$}}{
    $\mathrm{visited}[V] := \texttt{true}$ \;
    \ForEach{$W \in \Ch(V)$}{
      \lIf{\textbf{\emph{not}} $\mathrm{visited}[W]$}{
          \FVisit{$G$, $W$}
        }
      }
  }
  \BlankLine
  \ForEach{$X \in \bX$\label{line:startingfor}}{ 
    \lIf{\textbf{\emph{not}} $\mathrm{visited}[X]$}{\FVisit{$G$, $X$}}
  }
    \Return $\{V \mid \mathrm{visited}[V] = \texttt{true}\}$\;
  \caption{Depth-First-Search started at vertex set $\bX$. The set of all vertices
  visited during the search is returned.}
  \label{alg:findingdfs}
\end{algorithm}

Note that the search is started for each vertex $X \in \bX$ separately
in the for-loop in line~\ref{line:startingfor}, unless this vertex was already visited
during an earlier search. The latter guarantees the run time of $O(n+m)$,
because \texttt{visit} is called at most once for each vertex and hence each
edge is visited at most once. 

The Bayes-Ball algorithm is a generalization of this approach to allow
for the more complicated path definition used in d-connectedness,
given a set $\bZ$. In
particular, which neighbors of $V$ have to be considered depends on
the direction of the edge with which $V$ was reached. 
If $V$ is reached through $\leftarrow V$, then both $W$ with
$\leftarrow V \leftarrow W$ and $\leftarrow V \rightarrow W$
correspond to an open path if $V \not\in \bZ$, else the path is
closed.
If $V$ is reached through $\rightarrow V$ and $V \not\in \bZ$, then only
children $W$ with $\rightarrow V \rightarrow W$ correspond to an open
path. Conversely for parents $W$ with $\rightarrow V \leftarrow W$,
the path is closed and would only be open if $\De(V) \cup \bZ \neq
\emptyset$.

To check the latter, it would, however, be inconvenient (and costly with regard to the
run time) to check all descendants of $V$ for membership in $\bZ$. One
insight of the Bayes-Ball search is that this is not necessary.
Indeed, the algorithm will only check whether $V \in \bZ$ and if it is and
$V$ is a collider the search continues.

This suffices because assume there is such a $Z$ which is in $\bZ$ and
a descendant of $V$. This means, we have $\rightarrow V \leftarrow W$
and $V \rightarrow P_1 \rightarrow P_2 \rightarrow \dots \rightarrow P_p
\rightarrow Z$. Then Bayes-Ball will find the open \emph{way} $\rightarrow
V \rightarrow P_1 \rightarrow P_2 \rightarrow \dots \rightarrow P_p
\rightarrow Z \leftarrow P_p \leftarrow \dots \leftarrow P_2
\leftarrow P_1 \leftarrow V \leftarrow W$.

A \emph{way} allows for vertices to be visited more than once
(compared to a path). It holds that two vertices are d-connected by an
open path given $\bZ$ if, and, only if there is an open way. Here, a way
is open if every non-collider is not in $\bZ$ and every collider is in
$\bZ$ (in contrast to a path where the latter holds with regards to the
descendants of a collider). 

The implementation of Bayes-Ball is given in
Algorithm~\ref{alg:bb}. It
stores for each vertex separately whether it was visited through an
incoming and whether it was visited through an outgoing edge. As the
neighbors for continuing the search are different in the two cases, both have to be considered
and a vertex may be visited twice. More precisely, the
neighbors considered when visiting $V$ depend on the membership of $V
\in \bZ$ and the direction of the edge through which $V$ was discovered
in the manner described above.

As function \texttt{visit} is called at most twice for each vertex,
each edge is considered a constant number (more precisely, three) of
times as well. Hence, the algorithm has run time $O(n+m)$.
\begin{algorithm}
  \DontPrintSemicolon
  \SetKwInOut{Input}{input}\SetKwInOut{Output}{output}
  \Input{A DAG $G = (\bV,\bE)$ and disjoint vertex sets $\bX$ and $\bZ$.}
  \Output{Set of vertices d-connected to $\bX$ in $G$ given $\bZ$.}
  \SetKwFunction{FVisit}{visit}
  \SetKwProg{Fn}{function}{}{end}

  Initialize $\mathrm{visited}[V,\mathrm{inc}]$ and
  $\mathrm{visited}[V,\mathrm{out}]$ with \texttt{false} for all $V \in \bV$. \;

  \BlankLine

  \Fn{\FVisit{$G$, $V$, $\mathrm{edgetype}$}}{
    $\mathrm{visited}[V, \mathrm{edgetype}] := \texttt{true}$ \;
    
    \If{$V \not\in \bZ$}{
      \ForEach{$W \in \Ch(V)$}{
        \lIf{\textbf{\emph{not}} $\mathrm{visited}[W, \mathrm{inc}]$}{
          \FVisit{$G$, $W$, $\mathrm{inc}$}
        }
      }
    }
    \If{$(\mathrm{edgetype} = \mathrm{inc}$ \textbf{\emph{and}}  $V \in \bZ)$
      \textbf{\emph{or}}
      $(\mathrm{edgetype} = \mathrm{out}$ \textbf{\emph{and}} $V \not\in \bZ)
    $}{
    \ForEach{$W \in \Pa(V)$}{
      \lIf{\textbf{\emph{not}} $\mathrm{visited}[W, \mathrm{out}]$}{
        \FVisit{$G$, $W$, $\mathrm{out}$}
      }
    }
  }
}

\BlankLine

  \ForEach{$X \in \bX$}{
    \lIf{\textbf{\emph{not}} $\mathrm{visited}[X, \mathrm{out}]$}{
      \FVisit{$G$, $X$, $\mathrm{out}$\label{line:startwithout}}
    }
  }

  \Return $\{V \mid \mathrm{visited}[V, \mathrm{inc}]$
    \textbf{or} $\mathrm{visited}[V,
  \mathrm{out}] = \texttt{true} \}$\;
  \caption{Bayes-Ball Search started at vertex set $\bX$. The set of
  all vertices d-connected to $\bX$ given $\bZ$ in $G$ is returned.}
  \label{alg:bb}
\end{algorithm}

Note that it suffices to start \texttt{visit} with edgetype $\mathrm{out}$
(line~\ref{line:startwithout}), as
$\bX$ and $\bZ$ are disjoint and hence all neighbors of $X$ are considered
during this call.

In both the DFS and the Bayes-Ball algorithm, we gave a formulation
which returns all visited vertices. As utilized in
Lemma~\ref{lem:find:z1} in the main paper, this enables testing
d-separation (the statement $(\bX \indep Y | \bZ)_G$ holds
if $Y$ is not visited when starting Bayes-Ball at $\bX$). 

\section{Missing Proofs of Section~\ref{section:lintimefinding}}
\label{appendix:missingproofssec3}
We now provide the missing proof of Theorem~\ref{thm:findingzab}. It
proceeds by showing that Algorithm~\ref{alg:findingzab} correctly
labels vertices as forbidden (by correctness of
Lemma~\ref{lemma:forbidden}), which yields the statement.

\begin{proof}[Proof of Theorem~\ref{thm:findingzab}]
The correctness follows from Lemma~\ref{lemma:forbidden} and the fact that a
vertex $V$ is marked forbidden, if, and only if, (A) it is not
in $\bZ_{(\mathrm{i})}$ or (B) it is a child of $\bY$ or (C) 
there is an open BD way from $V$ to $\bY$ given $\bX$ over forbidden vertices. 
For this, we first observe that the algorithm is sound, i.e, a vertex marked
forbidden is actually forbidden. Moreover, the
completeness with regard to (A) and (B) is clear, hence, it  remains to show it with regard to (C).

Consider the situation when the algorithm
has terminated and assume there are forbidden vertices, which have not been visited by the algorithm and are not
marked forbidden (let these be $\bW
= \{W_1, \dots, W_k\}$ and note that these are forbidden due to (C), else they
would have been marked). We show that there exists an open BD way $\pi$
over forbidden vertices from some $W_i$ to $\bY$ which does not contain any other
$W_j$. If this were not the case, consider the last $W_j$ on an open
BD way from $W_i$ to $\bY$ (such a way has to exist as $W_i$ is forbidden,
it is only not \emph{marked} forbidden). If
it occurs as $W_j \leftarrow$ the existence of $\pi$ would follow.
Hence, all such $W_j$ occur
as $W_j \rightarrow$. Then it would hold that $(\bW \indep \bY \mid
\bX)_{G_{\underline{\bW}}}$ and the vertices in $\bW$ would not be forbidden. Thus, there is such a $\pi$ and it starts with the edge
$W_i \leftarrow P$ with $P$ being visited by the algorithm (on $\pi$, every vertex is marked
forbidden and it is open, hence, the graph search must have reached $P$ by correctness of
Bayes-Ball). 
Furthermore, no vertex $X\in\bX$ occurs on $\pi$. It cannot occur as
non-collider since that would block~$\pi$; and it cannot occur as
collider, since then the path from $W_j$ to $\to X$ would be a BD path from $X$ to $W_j$ and $W_j$ would already be marked as forbidden.
It follows that $W_i$ as a child of $P$ is also visited and marked
forbidden, which leads to a contradiction. 

The run time bound can be derived from the fact that for each vertex we call \texttt{visit}
at most two times (for $\mathrm{edgetype} = \mathrm{inc}$ and
$\mathrm{edgetype} = \mathrm{out}$)
and as every such call has cost $O(|\Ne(v)|)$, every edge is visited only a
constant amount of times. Hence, we obtain a run time of $O(n+m)$.
\end{proof}

\section{Missing Proofs of Section~\ref{section:minimal}}
In this section, we provide the missing proofs of
Section~\ref{section:minimal}. During this, we develop an elegant and
concise framework for implementing the various graph searches
discussed in this work.

Each step of Algorithm~\ref{alg:finding-minimal} for finding minimal
FD sets is performed by basing on
the DFS approach introduced in Section~\ref{appendix:bb}. However,
the searches have different rules which vertices can be visited and which vertices are yielded to the output set. 
Rather than repeating the description of the algorithm for each step,
we generalize the DFS presented in Algorithm~\ref{alg:findingdfs} to one general  graph search, Algorithm~\ref{alg:general:visit}, that can handle all different cases. Each step of Algorithm~\ref{alg:finding-minimal} becomes then a call to Algorithm~\ref{alg:general:visit}.

In the generalization, 
we specify the rules in the form of a table rather than implicitly as code in a "\FVisit{}" function.
For each possible combination of previous and next edge, the table lists if the vertex after the next edge should be visited. The table also lists whether that vertex should be yielded to the output. 

There are four possible combinations $\to V\to W$, $\gets V\gets W$, $\gets V\to W$, and $\to V\gets W  $ of explicit edges to consider. 
The first edge $\to V$ or $\gets V$ corresponds to the parameters of the "\FVisit{}" function. $V$ is the visited vertex and the edge the $\mathrm{edgetype}$ parameter. The second edge $V\to W$ or $V\gets W$, describes whether a parent or child $W$ of $V$ will be visited next. 

Furthermore, we consider two additional combinations of the initial
edges: When starting the search at vertices $\bX$, we visit the
children and parents of $\bX$, similarly to the {\bf foreach}{\ $X \in
\bX$} loop in Algorithm~\ref{alg:findingdfs}. Since there are no
previously visited edges into $\bX$, these are denoted as two separate
combinations  $\mathrm{init}\ V \to W$ and $\mathrm{init}\ V \gets W$, where $V \in \bX$. 

For each combination, the rule table has a row. In each row, it says,
whether the vertex $W$ is visited next, and/or the vertex $W$ is yielded to the output set. Like the DFS, the general graph search visits all vertices that can be
visited, and in this sense returns a maximal set. It also runs in linear time.

\begin{algorithm}
  \caption{General  graph search in time $O(n+m)$.}
  \label{alg:general:visit}
  \DontPrintSemicolon
  \SetKwInOut{Input}{input}\SetKwInOut{Output}{output}
  \Input{A DAG $G = (\bV,\bE)$, starting vertices $\bX$, and rule
  table $\tau$ mapping
$(\mathrm{edgetype},\mathrm{vertex},\mathrm{edgetype},\mathrm{vertex})$
to $\wp\{\mathrm{continue},\mathrm{yield}\}$. }
  \Output{Yielded vertices.}
  \SetKwFunction{FVisit}{visit}
  \SetKwProg{Fn}{function}{}{end}

  \BlankLine

  Initialize $\mathrm{visited}[V,\mathrm{inc}]$,
  $\mathrm{visited}[V,\mathrm{out}]$ and $\mathrm{result}[V]$
  with \texttt{false} for all $V \in \bV$. \;  
  \BlankLine

  \Fn{\FVisit{$G$, $V$, $\mathrm{edgetype}$}}{
  $\mathrm{visited}[V, \mathrm{edgetype}] := \texttt{true}$ \;
  \ForEach{$e \in \{\mathrm{inc},\mathrm{out}\}$}{
    
    \If{$(e = \mathrm{inc})$}{$\bN := \Ch(V)$}
    \If{$(e = \mathrm{out})$}{$\bN := \Pa(V)$}

  \ForEach{$W \in \bN$}{
    \If{$\mathrm{yield} \in \tau(\mathrm{edgetype}, V, e, W)$ }{
        $\mathrm{result}[W] := \texttt{true}$ \;
      }
      \If{\textbf{\emph{not}} $\mathrm{visited}[W, e]$
      \textbf{\emph{and}} $\mathrm{continue} \in \tau(\mathrm{edgetype}, V, e, W)$ }{
       \FVisit{$G$, $W$, $e$}\; 
      }
    }
  }
 }
 \BlankLine 
  \ForEach{$V \in \bX$}{
    \FVisit{$G$, $V$, $\mathrm{init}$} \;
  }

  \Return $\{V \mid \mathrm{result}[V] = \texttt{true}\}$ \;
\end{algorithm}

\begin{algorithm}[ht]
  \DontPrintSemicolon
  \SetKwInOut{Input}{input}\SetKwInOut{Output}{output}
  \Input{A DAG $G = (\bV,\bE)$ and vertex sets $\bX$ and $\bZ$.}
  \Output{Set of vertices d-connected to $\bX$ in $G$ given $\bZ$.}

  \BlankLine
  
Return the vertices yielded by Algorithm~\ref{alg:general:visit} starting from vertices $\bX$ with rule table\vspace{1mm}
%
%
\begin{tabular}{lll} \toprule
	case     & continue to $W$ & yield $W$ \\ \midrule
  init $V \to W$   & $\trueInTable$         & $\trueInTable$    \\
  init $V \gets W$ & $\trueInTable$  & $\trueInTable$ \\ 
	$\to V\to W$      & $V \not\in \bZ$         & $V \not\in \bZ$         \\
	$\gets V\gets W$  & $V \not\in \bZ$  & $V\not\in \bZ$             \\
	$\gets V\to W$    & $V\not\in \bZ$    & $V \not\in \bZ$              \\
	$\to V\gets W$    & $V\in \bZ$         & $V  \in \bZ$              \\ \bottomrule
\end{tabular}
\BlankLine
  \caption{Bayes-Ball Search as special case of
  Algorithm~\ref{alg:general:visit}. It returns the same set as
Algorithm~\ref{alg:bb}.}
  \label{alg:bb2}
\end{algorithm}

\begin{algorithm}
  \DontPrintSemicolon
  \SetKwInOut{Input}{input}\SetKwInOut{Output}{output}
 \DontPrintSemicolon
  \SetKwInOut{Input}{input}\SetKwInOut{Output}{output}
  \Input{A DAG $G = (\bV,\bE)$ and sets $\bX$, $\bY$, $\bZ_{(\mathrm{i})}
  \subseteq \bV$.}
  \Output{Set $\bZ_{(\mathrm{ii})}$.}
  \SetKwFunction{FVisit}{visit}
  \SetKwProg{Fn}{function}{}{end}

  \BlankLine
  
  
  $\bA :=\An(\bY)$

Let $\bZ'$ be the vertices yielded by Algorithm~\ref{alg:general:visit} starting from vertices $\bY$ with rule table\vspace{1mm}
\begin{tabular}{lll} \toprule
	case     & continue to $W$ & yield $W$ \\ \midrule
  init $V \to W$   & $\trueInTable$         & $\trueInTable$    \\
  init $V \gets W$ & $W\not\in\bZ_{(\mathrm{i})}$  & \falseInTable \\ 
	$\to V\to W$      & $V \not\in\bX$        &  $V \not\in\bX$ \\
	$\gets V\gets W$  & $V \not\in \bX \land W\not\in\bZ_{(\mathrm{i})}$  & \falseInTable \\ 
	$\gets V\to W$    & $V \not\in \bX$ & $V \not\in \bX$             \\
  $\to V\gets W$    & $V\in \bA \land W\not\in \bZ_{(\mathrm{i})}$
                    & \falseInTable              \\ \bottomrule
\end{tabular}
\BlankLine
\Return{$\bZ_{(\mathrm{i})}\setminus\bZ'$}
  \caption{Using Algorithm~\ref{alg:general:visit} to find the
  forbidden vertices. It returns the same set as
Algorithm~\ref{alg:findingzab}.}
  \BlankLine
  \label{alg:findingzab:table}
\end{algorithm}

 \begin{lemma}\label{lem:table:path}
 Given a DAG $G = (\bV,\bE)$, starting vertices $\bX$, and rule table $\tau$, Algorithm~\ref{alg:general:visit} returns a maximal vertex set $\bW$ such that there exists a way
 $v_1 e_1 v_2 e_2 \ldots e_{k-1} v_k $ where $v_i\in\bV$, $e_i\in\bE$, $v_1 \in \bX$, $v_k \in \bW$, and either
 
 \begin{itemize}
   \item $k = 2$ and $\tyield\in\tau(\mathrm{init}, v_{1}, e_{1}, v_{2})$, or
   \item $\tcontinue\in\tau(\mathrm{init}, v_1, e_1, v_2)$, $\tcontinue\in\tau(e_i, v_{i+1}, e_{i+1}, v_{i+2})$ (for $i + 2 < k$),  and $\tyield\in\tau(e_{k-2}, v_{k-1}, e_{k-1}, v_{k})$,
 \end{itemize}
 
 in time $O(n+m)$.
 \end{lemma}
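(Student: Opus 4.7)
The plan is to establish three properties of Algorithm~\ref{alg:general:visit}: soundness (every yielded vertex admits a witness way), maximality (every vertex admitting a witness way is yielded), and the $O(n+m)$ running time. Since the algorithm mirrors a standard depth-first search parametrised by the rule table $\tau$, the argument follows the textbook correctness proof for DFS, augmented with bookkeeping to track the edgetype entering each vertex.

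For soundness, I would proceed by induction on the recursion depth, maintaining the invariant that whenever \texttt{visit}$(G, V, t)$ is called with $t \in \{\mathrm{inc}, \mathrm{out}\}$, there is a way $v_1 e_1 v_2 \cdots e_{j-1} V$ with $v_1 \in \bX$, last edge $e_{j-1}$ of type $t$, and $\tcontinue$ in $\tau(\mathrm{init}, v_1, e_1, v_2)$ as well as in every intermediate transition along the way; for $t = \mathrm{init}$ the way is the single vertex $V \in \bX$. A recursive call \texttt{visit}$(G, W, e)$ is performed only when $\tcontinue \in \tau(t, V, e, W)$, so the invariant is preserved. Whenever $\mathrm{result}[W]$ is set to \texttt{true} inside some call \texttt{visit}$(G, V, t)$, we have $\tyield \in \tau(t, V, e, W)$, and appending the edge $e$ to the way for $V$ produces the witness way for $W$ demanded by the lemma.

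For maximality, I would induct on the length $k$ of a purported witness way. In the base case $k = 2$, the outer loop issues \texttt{visit}$(G, v_1, \mathrm{init})$, whose body iterates over both $\Ch(v_1)$ and $\Pa(v_1)$ and thus examines the edge $e_1$ to $v_2$, setting $\mathrm{result}[v_2]$ since $\tyield \in \tau(\mathrm{init}, v_1, e_1, v_2)$. For $k > 2$, the inductive hypothesis yields that \texttt{visit}$(G, v_{j+1}, e_j)$ is invoked at some point during the run for every $j \le k - 2$; in particular the call on $v_{k-1}$ with edgetype $e_{k-2}$ inspects the neighbour $v_k$ via $e_{k-1}$ and marks $\mathrm{result}[v_k]$ using the final $\tyield$.

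The principal obstacle is the $\mathrm{visited}$ short-circuit in the maximality step: if $\mathrm{visited}[v_{j+1}, e_j]$ has already been set \texttt{true} by the time we would recurse along the witness way, no fresh call occurs, and I cannot directly conclude that \texttt{visit} is triggered along that particular way. The resolution is that the earlier execution of \texttt{visit}$(G, v_{j+1}, e_j)$ enumerated exactly the same neighbours under the same $\tau$, so all downstream yields and recursive calls already took place; the induction must be phrased in terms of existence somewhere in the whole execution rather than the timing of any single call. The runtime bound is then immediate: each pair $(V, t)$ with $t \in \{\mathrm{inc}, \mathrm{out}\}$ triggers \texttt{visit} at most once, the initial $|\bX|$ calls with $t = \mathrm{init}$ add $O(n)$ more, and every call performs $O(\deg V)$ work in its inner loops, so the total work is $O(n + m)$.
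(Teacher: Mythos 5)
Your proof is correct and follows essentially the same route as the paper, which simply asserts that the algorithm "performs an obvious graph search" and notes the per-edgetype visitation bound; you supply the soundness/maximality induction and the handling of the $\mathrm{visited}[\cdot,\cdot]$ short-circuit that the paper leaves implicit. The key observation you make — that a repeated visit with the same edgetype enumerates the same neighbours under the same rules, so the induction should assert that the relevant call occurs somewhere in the execution rather than along a specific way — is exactly the right way to close the only nontrivial gap.
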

 
 \begin{proof}
 The algorithm performs an obvious graph search. Each vertex is only
 visited, if there exists a way from $\bX$ to the vertex on which the
 rule table $\tau$ returns $\mathrm{continue}$ for each two-vertex
 sub-way. A vertex is only returned if $\tau$ returns $\mathrm{yield}$.
 
 The run time is $O(n+m)$, because each vertex is only visited once from each edgetype.
 \end{proof}

 Algorithm~\ref{alg:bb2} illustrates how the Bayes-Ball Algorithm~\ref{alg:bb} can be given as  a special case of Algorithm~\ref{alg:general:visit}.
 
 Algorithm~\ref{alg:findingzab:table} shows how
 Algorithm~\ref{alg:findingzab} can be implemented using
 Algorithm~\ref{alg:general:visit}. In
 Proposition~\ref{prop:finding:zab:table}, we show these two
 algorithms are equivalent. Most of the old algorithm can be directly
 translated into table rules. However, the vertices marked as
 $\mathrm{continuelater}$ need to be handled differently since the
 table rules are static and do not change, so it would not be possible to
 encode that a vertex first cannot be visited from its children before
 it is in  $\mathrm{continuelater}$, but later can be visited from a
 child after it is in $\mathrm{continuelater}$. We resolve this
 problem by utilizing the set of ancestors of $\bY$ (these have to be
 computed beforehand, making this implementation slightly less
 efficient from a practical point-of-view; the asymptotic run time is
 unaltered).
 
 \begin{proposition}\label{prop:finding:zab:table}
 Algorithm~\ref{alg:findingzab} and Algorithm~\ref{alg:findingzab:table} are equivalent.
 \end{proposition}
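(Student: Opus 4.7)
The plan is to show that both algorithms produce the same output by establishing that each computes $\bZ_{(\mathrm{ii})}$. Since Theorem~\ref{thm:findingzab} already proves this for Algorithm~\ref{alg:findingzab}, the task reduces to verifying the same for Algorithm~\ref{alg:findingzab:table}. Because the tabled algorithm returns $\bZ_{(\mathrm{i})}\setminus\bZ'$, it suffices by Lemma~\ref{lemma:forbidden} to show that $\bZ'\cap\bZ_{(\mathrm{i})}$ is exactly the set of vertices in $\bZ_{(\mathrm{i})}$ that are forbidden via conditions (B) or (C).

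The key technical step is a structural invariant for Algorithm~\ref{alg:findingzab}: a vertex $V$ is ever visited with edgetype $\mathrm{out}$ only if $V\in\An(\bY)$. I would establish this by induction on the call tree. The base cases are the initial calls for $V\in\bY$. For the inductive step, such a call on $V$ is either invoked from visiting some $W$ with edgetype $\mathrm{out}$ where $V$ is a forbidden parent of $W$, or via the continuelater activation at the end of visiting $V$ with edgetype $\mathrm{inc}$, where $\mathrm{continuelater}[V]$ was set earlier inside a visit of some $W$ with edgetype $\mathrm{out}$ and $V$ a parent of $W$. In both cases, $V\to W$ for some $W\in\An(\bY)$ by the inductive hypothesis, hence $V\in\An(\bY)=\bA$.

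Equipped with this invariant, I would match each row of the rule table of Algorithm~\ref{alg:findingzab:table} to the corresponding action of the original. The init rules emulate the initial visit of $Y$ with edgetype $\mathrm{out}$, which traverses children and forbidden parents of $Y$. The rules $\to V\to W$ and $\gets V\to W$ capture iteration over children when $V\notin\bX$ and yield each such child, which exactly corresponds to newly marking forbidden children in Algorithm~\ref{alg:findingzab}. The rule $\gets V\gets W$ corresponds to visiting $V$ with edgetype $\mathrm{out}$ and iterating over forbidden parents. Crucially, the rule $\to V\gets W$ with the guard $V\in\bA$ captures the continuelater mechanism precisely: the invariant ensures that the original extends to $V$'s parents from a "$V$ reached via an incoming edge" state exactly when $V\in\An(\bY)$, and the restriction $W\notin\bZ_{(\mathrm{i})}$ matches the original iterating only over forbidden parents.

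The main obstacle is verifying this correspondence rigorously in both directions. One direction shows that every yielded vertex $W\in\bZ_{(\mathrm{i})}$ is forbidden: either as a child of $\bY$ (condition (B)), or via an open BD way to $\bY$ with forbidden non-terminals (condition (C)), the latter read off from the traced way ending in $\to W$. This follows by observing that each continue transition preserves open-ness of the traced way and restricts traversal to forbidden intermediate vertices. The other direction shows that every forbidden vertex in $\bZ_{(\mathrm{i})}$ is reached and yielded, by induction on the length of the witnessing BD way, with the various rules covering all possible shapes of such ways. Since the yielded sets of both algorithms agree, so do their final outputs.
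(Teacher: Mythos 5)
Your overall strategy---matching the table rules to the loops of Algorithm~\ref{alg:findingzab} and isolating the $\to V \gets W$ rule as the crux, justified by the invariant that only ancestors of $\bY$ are ever visited with edgetype $\mathrm{out}$ (equivalently, marked $\mathrm{continuelater}$)---is exactly the paper's, and your induction for that invariant is the same as theirs. The gap is in the converse direction, which you assert as ``exactly when $V\in\An(\bY)$'' but never argue: the invariant only shows that the guard $V\in\bA$ in Algorithm~\ref{alg:findingzab:table} is \emph{at least as permissive} as the $\mathrm{continuelater}$ mechanism, and you still owe the argument that this extra permissiveness does not cause the table version to visit (and yield children of) vertices that Algorithm~\ref{alg:findingzab} never reaches. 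The paper closes this with a short argument: every ancestor $U$ of $\bY$ that the search reaches via an incoming edge becomes forbidden, hence so do the vertices on a directed path from $U$ to $\bY$, so the parent-exploration propagates backwards along that path and the parents of $U$ do get visited by Algorithm~\ref{alg:findingzab} as well (either via $\mathrm{continuelater}$ or directly as forbidden parents). Without some such argument the two algorithms could a priori differ.

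Your fallback---proving Algorithm~\ref{alg:findingzab:table} correct directly against Lemma~\ref{lemma:forbidden}, which together with Theorem~\ref{thm:findingzab} would also yield the proposition---would close the gap, but the step you lean on there is not right as stated: the $\to V\gets W$ transition places a collider $V$ on the traced way that is in general \emph{not} in $\bX$, so ``each continue transition preserves open-ness of the traced way'' fails precisely at the rule whose soundness is in question. To make that route work you must reroute the witnessing way at $V$ along a directed path $V\to\dots\to\bY$ (which exists precisely because of the guard $V\in\bA$), turning $V$ into a non-collider, and separately argue that the vertices on that detour are forbidden. Either repair is short, but as written the crucial half of the equivalence is missing.
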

 \begin{proof}
 The "\FVisit{}" function of Algorithm~\ref{alg:findingzab} has two
 {\bf foreach} loops. The first loop visits all descendants of each
 visited vertex. This corresponds to the three rules of cases ending with $\to W$.
 The second loop visits all ancestors until the first non-forbidden
 vertex. This corresponds to the rules  $\mathrm{init}\ V \gets W$  and $\gets V\gets W$. 
 
 The "\FVisit{}" function only moves from an incoming to an outgoing edge when the vertex has been marked as $\mathrm{continuelater}$ (line~\ref{line:continue}). 
 This corresponds to the rule $\to V \gets W$ because only 
 the ancestors of $\bY$ can be marked as $\mathrm{continuelater}$.  
  They are only marked by the second loop, which only visits ancestors. First the ancestors of $\bY$, and then the ancestors of vertices already marked as $\mathrm{continuelater}$. Inductively, it follows that only ancestors of $\bY$ are ever marked.
 In the other direction, each ancestor $W$ of $\bY$ can be treated as
 if  it was marked as $\mathrm{continuelater}$. Either it is marked
 there, or it is visited by an incoming edge from its parent, so all
 vertices on the path from $W$ to $\bY$ will be marked as forbidden.
 Thus all parents of the vertices on that  path can be visited, and the parents of $W$, too.
 \end{proof}
 
 Algorithm~\ref{alg:finding:minimal:tabled} gives the rules to
 implement the minimal FD search. 
 Proposition~\ref{prop:minimal:algo:equivalence} shows that those rules match the conditions given in Algorithm~\ref{alg:finding-minimal}. 

 \begin{algorithm*}
  \caption{Finding a minimal front-door adjustment set $\bZ$ relative to $(\bX,\bY)$ in time $O(n+m)$. The tables give for each case of possible edge types, the conditions under which the rule table should return continue or yield.}
  \label{alg:finding:minimal:tabled}
  \DontPrintSemicolon
  \SetKwInOut{Input}{input}\SetKwInOut{Output}{output}
  \Input{A DAG $G = (\bV,\bE)$ and sets $\bX$, $\bY$, $\bI\subseteq\bR\subseteq\bV$.}
  \Output{Minimal FD set $\bZ$ with $\bI\subseteq\bZ\subseteq\bR$ or $\bot$.}
  \SetKwFunction{FVisit}{visit}
  \SetKwProg{Fn}{function}{}{end}

  \BlankLine
  Compute the set $\bZ_{(\mathrm{ii})}$ with Algorithm~\ref{alg:finding}.

  \uIf{$\bZii= \bot$}{\Return $\bot$}
 

Let $\bZ_{\text{An}}$ be given by Algorithm~\ref{alg:general:visit}
from vertices $\bY$ with rule table $\qquad$
\begin{tabular}{lll} \toprule
	case     & continue to $W$ & yield $W$ \\ \midrule 
  init $V \to W$   & $\falseInTable$         & $\falseInTable$    \\
  init $V \gets W$ & $W \not\in\bX\cup\bY\cup\bZii$   & $W\in\bZii$      \\
	$\to V\to W$      & $\falseInTable$         & $\falseInTable$         \\
	$\gets V\gets W$  & $W \not\in\bX\cup\bY\cup\bZii$   & $W\in\bZii$               \\
	$\gets V\to W$    & $\falseInTable $        &  $\falseInTable$              \\
	$\to V\gets W$    & $\falseInTable$         &  $\falseInTable$ \\ \bottomrule
\end{tabular}

\BlankLine


Let $\bZXY$ be given by Algorithm~\ref{alg:general:visit} from
vertices $\bX$ with rule table $\qquad$
\begin{tabular}{LLL} \toprule
	\text{case}     & \text{continue to } W                         &
  \text{yield }W \\ \midrule
  \text{init } V \to W   & W\not\in\bX\cup\bY\cup\bI\cup\bZ_{\text{An}}  &   W\in\bZ_{\text{An}}             \\
  \text{init } V \gets W & \falseInTable &  \falseInTable
  \\ 
	\to V\to W      & W\not\in\bX\cup\bY\cup\bI\cup\bZ_{\text{An}} &   W\in\bZ_{\text{An}}    \\
	\gets V\gets W  & \falseInTable &  \falseInTable     \\
	\gets V\to W    & \falseInTable &  \falseInTable     \\
	\to V\gets W    & \falseInTable &  \falseInTable     \\ \bottomrule
\end{tabular}

%


\BlankLine
Let $\bZZY$ be given by Algorithm~\ref{alg:general:visit} from vertices $\bI\cup\bZXY$ with rule table\vspace{1mm}
\begin{tabular}{LLL} \toprule
  \text{case}     & \text{continue to } W                         &
  \text{yield }W \\ \midrule
  \text{init}\ V \to W   & \falseInTable                                  & \falseInTable               \\
  \text{init}\ V \gets W & W \not\in \bX \cup \bI\cup\bZXY    & W\in\bZ_{\text{An}}                \\
  \to V\to W      & W \not\in \bX \wedge V \not\in \bI \cup \bZ_{\text{An}} & W\in\bZ_{\text{An}}    \wedge V \not\in \bI \cup \bZ_{\text{An}}  \\
  \gets V\gets W  & W \not\in \bX \cup \bI \cup\bZXY                               & W\in\bZ_{\text{An}}      \\
  \gets V\to W    &  W \not\in \bX \wedge V \not\in \bI \cup \bZ_{\text{An}} & W\in\bZ_{\text{An}}    \wedge V \not\in \bI \cup \bZ_{\text{An}}  \\
  \to V\gets W    & V  \in \bI \cup \bZ_{\text{An}} \wedge W \not\in \bX \cup
  \bI\cup\bZXY     & V  \in \bI \cup \bZ_{\text{An}} \wedge W\in\bZ_{\text{An}}      \\
  \bottomrule
\end{tabular}
\BlankLine

\Return{$\bI\cup\bZXY\cup\bZZY$}

\BlankLine

\end{algorithm*}


 \begin{proposition}\label{prop:minimal:algo:equivalence}
 Algorithm~\ref{alg:finding-minimal} and Algorithm~\ref{alg:finding:minimal:tabled} are equivalent.
 \end{proposition}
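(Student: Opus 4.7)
The plan is to invoke Lemma~\ref{lem:table:path} three times, once for each of the three subsearches that Algorithm~\ref{alg:finding:minimal:tabled} performs (computing $\bZ_{\text{An}}$, $\bZXY$, and $\bZZY$), and to argue that the admissible ways encoded by each rule table match the ways implicit in the corresponding definition in Algorithm~\ref{alg:finding-minimal}. Lemma~\ref{lem:table:path} already guarantees that Algorithm~\ref{alg:general:visit} returns the maximal set of vertices reachable by a way whose every two-vertex sub-way satisfies the continue condition and whose final sub-way satisfies the yield condition, so it suffices to check the correspondence row by row.

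For $\bZ_{\text{An}}$, only the rows init $V\gets W$ and $\gets V\gets W$ are active, so the admissible ways are chains $Y\gets W_1\gets\cdots\gets W_k$ with internal vertices outside $\bX\cup\bY\cup\bZii$ and $W_k\in\bZii$; reversing these yields exactly the directed paths described in the definition of $\bZ_{\text{An}}$, with the length-one case corresponding to $W_k$ being a parent of $\bY$. The analogous forward argument handles $\bZXY$: only the init $V\to W$ and $\to V\to W$ rows fire, so the admissible ways are directed proper paths from $\bX$ to some $W_k\in\bZ_{\text{An}}$ whose interior avoids $\bI\cup(\bZ_{\text{An}}\setminus\{W_k\})$, which is precisely the condition in the definition of $\bZXY$.

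The main obstacle is the correspondence for $\bZZY$. Following Lemma~\ref{lem:characterization:minimal}, I read the condition ``no edge $Z_{\text{An}}\to$ with $Z_{\text{An}}\in\bI\cup\bZ_{\text{An}}$'' as excluding edges whose tail lies in $\bI\cup\bZ_{\text{An}}$ and which face forward along the way; at an internal vertex such a forward outgoing edge arises exactly in the chain configuration $\to V\to W$ and the fork configuration $\gets V\to W$. The plan is then to verify row by row that: the non-collider rows $\to V\to W$ and $\gets V\to W$ enforce $V\notin\bI\cup\bZ_{\text{An}}$, as required; the collider row $\to V\gets W$ enforces $V\in\bI\cup\bZ_{\text{An}}$, opening the collider; the chain row $\gets V\gets W$ imposes no extra condition on $V$ because its forward edge $W\to V$ has tail $W$, so no forbidden edge is introduced at $V$; every continue condition excludes $W\in\bX$ and, when $W$ is to be an internal vertex, excludes $W\in\bI\cup\bZXY$ so the way remains proper; and every yield condition requires $W\in\bZ_{\text{An}}$, so yielded vertices are legitimate candidates for $\bZZY\subseteq\bZ_{\text{An}}$.

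Combining the three correspondences via Lemma~\ref{lem:table:path} shows that $\bI\cup\bZXY\cup\bZZY$ computed by Algorithm~\ref{alg:finding:minimal:tabled} equals the set returned by Algorithm~\ref{alg:finding-minimal}, and when the initial call to Algorithm~\ref{alg:finding} returns $\bot$ both algorithms return $\bot$; this establishes the claimed equivalence.
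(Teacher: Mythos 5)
Your overall strategy is the paper's: apply Lemma~\ref{lem:table:path} to each of the three subsearches and check, row by row, that the ways admitted by each rule table coincide with the ways in the corresponding definition in Algorithm~\ref{alg:finding-minimal}. The $\bZ_{\text{An}}$ and $\bZXY$ parts are essentially fine (though note that the tables additionally forbid internal vertices of $\bY$, which the definitions do not; the paper closes this small mismatch by observing that a directed path through $\bY$ would contradict $\bZii$ satisfying FD(1), resp.\ by shortening the path to its first $\bY$-vertex).

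The genuine gap is in the $\bZZY$ correspondence. You assert that ``every continue condition \ldots when $W$ is to be an internal vertex, excludes $W\in\bI\cup\bZXY$ so the way remains proper.'' That is false for the forward-edge rows: in the cases $\to V\to W$ and $\gets V\to W$ the continue condition is $W\not\in\bX \wedge V\not\in\bI\cup\bZ_{\text{An}}$, which places no restriction on $W$'s membership in $\bI\cup\bZXY$. Consequently the search can enter a vertex of $\bI\cup\bZXY$ through an incoming edge (and then leave it as a collider, since $\bZXY\subseteq\bZ_{\text{An}}$), so the way witnessing $Z\in\bZZYT$ need not be proper, and your row-by-row check does not by itself give $\bZZYT\subseteq\bZZYP$. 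The missing step is the paper's subway argument: if the table-admissible way $\pi$ is not proper, pass to its proper subway $\pi'$ starting at the last occurrence of a vertex of $\bI\cup\bZXY$ on $\pi$, and observe that $\pi'$ must begin with a $\gets$ edge --- hence is still a proper BD way satisfying the conditions of step~4 of Algorithm~\ref{alg:finding-minimal} --- precisely because the rules $\to V\to W$ and $\gets V\to W$ only continue or yield when $V\not\in\bI\cup\bZ_{\text{An}}$, so no edge $V\to$ with $V\in\bI\cup\bZXY$ can occur on $\pi$. Without this repair the claimed equality of the two $\bZZY$ sets, and hence the equivalence of the algorithms, is not established.
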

 \begin{proof}
 Both algorithms follow the same approach, computing sets $\bZii$, $\bZ_{\text{An}}$, $\bZXY$, $\bZZY$, and returning $\bI\cup \bZXY \cup \bZZY$. We need to show that each set is equal between the algorithms.
 
 $\bZii$ is equal because it is computed by algorithm Algorithm~\ref{alg:finding}.
 
 \def\bZaP{\bZ^{\ref{alg:finding-minimal}}_{\text{An}}}
 \def\bZaT{\bZ^{\ref{alg:finding:minimal:tabled}}_{\text{An}}}
In the following, we store the number of the algorithm
(\ref{alg:finding-minimal} and \ref{alg:finding:minimal:tabled}) as
superscript to distinguish between the sets and then show their equality.
 Both $\bZaP$ and $\bZaT$ include $\Pa(\bY) \cap \bZii$. $\bZaP$ as
 "parent of $\bY$", $\bZaT$ from rule $\mathrm{init}\ V \gets W$. 
 If $\bZaP$ includes a non-parent~$V$, there exists a directed path from $V$ to $\bY$ 
 containing no vertex of $\bX\cup (\bZii\setminus \{V\})$. The shortest
 such path contains only one vertex of $\bY$. Starting from that
 vertex,
 Algorithm~\ref{alg:finding:minimal:tabled}  will take rule
 $\mathrm{init}\ V\gets W$ and then rule $\gets V\gets W$, until it reaches $V$ of $\bZaP$, which is yielded since it is in $\bZii$.  
 In the other direction, if $V \in \bZaT$, there exists a path from $\bY$ to $V$ containing only edges $\gets$ and not containing any vertex in $\bX\cup\bY\cup\bZii$ except at the endvertices due to Lemma~\ref{lem:table:path}, so $V\in\bZaP$.

 If $\bZXYP$ contains a vertex $Z$, there exists a directed proper path from $\bX$ to $Z$ 
 containing no vertex of $\bI\cup(\bZ_{\text{An}}\setminus \{Z\})$.
 It also contains no vertex of $\bY$, otherwise $\bZii$ would not be an FD set since there is an unblocked path from $\bX$ to $\bY$.
 Algorithm~\ref{alg:finding:minimal:tabled} visits the vertices along
 that path since the edges are $\to$ and no non-end vertex is in $\bX\cup\bY\cup\bI\cup\bZ_{\text{An}}$. $Z$ is in $\bZ_{\text{An}}$, so it is yielded into $\bZXYT$. 
 In the other direction, if $V \in \bZXYT$, there exists a path from $X\in\bX$ to $Z$ not containing a vertex of $\bX\cup\bY\cup\bI\cup\bZ_{\text{An}}$ except $X$ and $Z$ at the end.

 If $\bZZYP$ contains a vertex $Z$,
 there exists a proper BD way from $\bI\cup\bZXY$ to $Z$ containing no edge $Z_{\text{An}}\to$ with $Z_{\text{An}}\in\bI\cup\bZ_{\text{An}}$ and no vertex of $\bX$, and all colliders are in $\bI\cup\bZ_{\text{An}}$.  
 
 Algorithm~\ref{alg:finding:minimal:tabled} starts at $\bI\cup\bZXY$ visiting the vertices along the path. If the path only contains one edge, $Z$ is yielded in the initial rule since it is in $\bZ_{\text{An}}$. 
 In the case $\gets V\gets W$ and $\gets V\to W$, $V$ is not in $\bI \cup \bZ_{\text{An}}$, or the path would contain $V\to$. $W$ is not in $\bX$ since the path contains no vertex of $\bX$.
 In the case $\gets V\gets W$, $W$ is not in $\bX$ as well, and $W$ is not in $\bI \cup\bZXY $ since the path is proper.
 In the case $\to V\gets W$, $W$ is not in $\bX\cup\bI \cup\bZXY $ for the same reasons. $V$ is in $\bI\cup\bZ_{\text{An}}$ since it is a collider.
 The final vertex $Z$ is yielded, once it is reached, since it is in $\bZ_{\text{An}}$. 
 
 For $Z\in\bZZYT$,  Algorithm~\ref{alg:finding:minimal:tabled} finds a
 way $\pi$ from $\bI\cup\bZXY$ to $Z\in\bZ_{\text{An}}$. It is a BD way since it starts with $\gets$ from the initial rule.
 If it contains a causal edge $V\to$, it is either the case/rule $\to V\to W$ or $\gets V\to W$, and in either case $V\not\in \bI\cup\bZ_{\text{An}}$. It contains no vertex of $\bX$ since $W\not\in\bX$ in all five rules. The only collider occurs in rule $\to V\gets W $ and it is in $\bI \cup \bZ_{\text{An}}$.\\
 If $\pi$ is not proper, there exists a subway $\pi'$ that is proper.
 Suppose that subway is not a BD path. Then it starts with  $V \to $ where $V\in\bI\cup\bZXY\subseteq \bI\cup\bZ_{\text{An}}$.  But $\pi$ and $\pi'$ cannot contain this $V \to $ because rules $  \to V\to W $ and $  \gets V\to W $ only continue or yield, if $V\not\in\bI \cup \bZ_{\text{An}}  $.
 %
 %
 \end{proof}

 We are now able to give the proof of
 Theorem~\ref{thm:finding:minimal}.
%

 \begin{proof}[Proof of Theorem~\ref{thm:finding:minimal}]
First we show that the algorithm returns a set that is an FD set if and only if an FD set exists.
If $\bZii$ can be computed, an FD set exists, by correctness of
Algorithm~\ref{alg:finding}. Moreover, $\bZ$ satisfies condition FD(1)
because $\bZ_{\text{An}}$ and $\bI\cup\bZXY$ block all directed paths
from $\bX$ to $\bY$ and $\bZ$ satisfies condition FD(2) because it is a subset of $\bZii$.

Suppose FD(3) is not satisfied. There is a proper BD path $\pi$
from  $\bZ$ to $\bY$ open given $\bX$ in $G_{\underline{\bZ}}$. If
$\pi$ would contain a collider, the collider is opened by $\bX$, so
there exists a path from $\bZ$ to the collider to $\bX$, and $\bZ$
violates condition FD(2). Likewise, $\pi$ cannot contain a vertex of
$\bX$. Furthermore, $\pi$ does not contain $\bI\to$ or it would not be
open. As $\pi$ is not open in $G_{\underline{\bZii}}$, there is an edge $Z \to$ for a vertex $Z\in \bZ_{(ii)}\setminus\bZ$ on $\pi$. The edge is directed as $Z \to$ because 
$Z \gets$ would start another BD path.
Since $\pi$ does not contain a collider, all following edges are directed towards $ \bY$ and $Z$ is an ancestor of $\bY$. 
Hence, $\pi$ contains an edge $Z_{\text{An}} \to $ with $Z_{\text{An}} \in \bZ_{\text{An}}$. We will choose the first such edge. 

There exists a BD way $\pi'$ from $\bI\cup\bZXY$ to $Z_{\text{An}}$: If
$\pi$ starts in $\bI\cup\bZXY$, $\pi'$ is the subpath of $\pi$
until $Z_{\text{An}}$. If $\pi$ starts in $\bZZY \in \bZXY$, then
there is a longer BD way  $\pi''$ from $\bI\cup\bZXY$ to $\bZZY$ found in step 4, and $\pi'$ is the concatenation of $\pi''$ and the subpath of $\pi$ until $Z_{\text{An}}$. 
Neither $\pi''$ nor $\pi$ contain a vertex of $\bX$ or a collider not in $\bI\cup\bZ_{\text{An}}$, so this is also true for $\pi'$. $\pi''$ does not contain an edge $\bI\cup\bZ_{\text{An}} \to$ and $Z_{\text{An}}\to$ is the first such edge on $\pi$, so $\pi'$ does not contain an edge $\bI\cup\bZ_{\text{An}}\to$. Hence $Z_{\text{An}} \in \bZXY$, and $\pi$ is blocked in $G_{\underline{\bZ}}$.

Now we show that $\bZ$ is minimal according to
Lemma~\ref{lem:characterization:minimal}.
Condition 1. and 2. are satisfied by the definition of $\bZXY$
and $\bZXY$. For the latter, the BD path to $V$ can be extended
to a BD path to $\bY$ because $V\in\bZ_{\text{An}}$ is an ancestor of $\bY$. All colliders in $\bI\cup\bZ_{\text{An}}$ are in $\bZXY$ because there is such a  way to them.

The maximal sets can be constructed in linear time by traversing the paths that need to be blocked. 
The first usable vertex of the path is added to the corresponding set. 
Thereby for the each constructed set, each edge is only visited at
most once and each vertex is added at most once to the set.
Algorithms~\ref{alg:general:visit}
and~\ref{alg:finding:minimal:tabled} describe this graph traversal in
more detail and formally the linear run time follows from
Proposition~\ref{prop:minimal:algo:equivalence}. 
\end{proof}

\section{Run-Time Comparison for Real-Life DAGs}\label{appendix:further:runtimes}
We compare the performance of our algorithm with \textsc{jtb} one on real-world instances to further validate our empirical
claims. We report the run-times in seconds for the graphs in the \texttt{bnlearn}
repository in Table~\ref{table:bnlearn}. The stated times are averages over 10
runs. In each, variables $X$ and $Y$ were randomly chosen, moreover
half of the remaining variables, chosen again randomly, were included
in $\bR$. 

The experiments show that \textsc{jtb} is significantly slower
compared to our algorithms \textsc{find} and \textsc{min} on
\emph{all} instances. For larger graphs, this results
in run-times above one minute for \textsc{jtb}, while the our
algorithms take fractions of milliseconds. 
In practice, this difference matters, especially when the
procedure is called many times (e.g. searching for front-door sets in
multiple graphs or for many pairs of variables).  
\begin{table}
  \caption{Run-time in seconds of Python implementations of \textsc{find} and
    \textsc{min} (Algorithms~2 and~3 from the main paper) compared to
  \textsc{jtb} (Jeong et al.) on the \texttt{bnlearn} instances.
  In the table, $n$ denotes the number of variables and $m$ the number
  of edges of the instance. 
}
  \label{table:bnlearn}
  \centering
  \begin{tabular}{lccrrr}
    \toprule
    \multicolumn{3}{c}{Instance} & \multicolumn{3}{c}{Run-time in
    seconds} \\
    \cmidrule(r){1-3} \cmidrule(l){4-6}
      Name & $n$ & $m$ & \textsc{find} & \textsc{min} & \textsc{jtb} \\
    \midrule
      asia & 8 & 8 & 0.00004 & 0.00005 & 0.00187 \\
      cancer & 5 & 4 & 0.00003 & 0.00004 & 0.00098 \\ 
      earthquake & 5 & 4 & 0.00003 & 0.00005 & 0.00157 \\ 
      sachs & 11 & 17 & 0.00003 & 0.00005 & 0.00667 \\ 
      survey & 6 & 6 & 0.00004 & 0.00005 & 0.00131 \\ 
      alarm & 37 & 46 & 0.00005 & 0.00010 & 0.10388 \\
      barley & 48 & 84& 0.00009 & 0.00013 & 0.14188 \\ 
      child & 20 & 25 & 0.00004 & 0.00007 & 0.01717 \\ 
      insurance & 27 & 52 & 0.00006 & 0.00010 & 0.04396 \\ 
      mildew & 35 & 46 & 0.00007 & 0.00011 & 0.04161 \\ 
      water & 32 & 66 & 0.00005  & 0.00011 & 0.07481 \\ 
      hailfinder & 56 & 66 & 0.00008 & 0.00011 & 0.18193 \\ 
      hepar2 & 70 & 123 & 0.00006 & 0.00011 & 0.64581 \\ 
      win95pts & 76 & 112 & 0.00006 & 0.00009 & 0.35280 \\ 
      andes & 223 & 338 & 0.00029 & 0.00042 & 11.27554 \\ 
      diabetes & 413 & 602 & 0.00074 & 0.00107 & 42.71703 \\ 
      link & 724 & 1125 & 0.00030 & 0.00040 & 194.19350 \\ 
      munin1 & 186 & 273 & 0.00011 & 0.00016 & 3.41474 \\ 
      pathfinder & 109 & 195 & 0.00005 & 0.00012 & 1.75494 \\ 
      pigs & 441 & 592 & 0.00014 & 0.00019 & 10.54953 \\ 
      munin & 1041 & 1397 & 0.00028 & 0.00043 & 145.71717 \\ 
      munin2 & 1003 & 1244 & 0.00028 & 0.00039 & 128.43397 \\ 
      munin3 & 1041 & 1306 & 0.00030 & 0.00044 & 236.72920 \\ 
      munin4 & 1038 & 1388 & 0.00030 & 0.00044 & 179.49700 \\ \bottomrule
  \end{tabular}
\end{table}

 \section{Further Experimental Results}\label{appendix:further:experiments}
In the experimental setup of this section, we consider Erd\H{o}s-R\'enyi random graphs with
$1.5n$, $2.5n$ and $5n$ edges, which correspond to expected degree
(i.e., number of neighbors) $3$, $5$ and $10$. For the number of
vertices, we choose powers of $2$ (as this gives us a wide range of
graph sizes), namely $2^4$ up to $2^{17}$, which corresponds to 131072
vertices. Furthermore, we consider
two settings for choosing the size of $\bX$ and $\bY$: (i) we choose
both sizes randomly between $1$ and $3$ and (ii) we let $\bX$ and
$\bY$ grow logarithmically in the number of vertices (more precisely,
we have $|\bX| = |\bY| = \log_2(n)-3$; we subtract 3 such that $|\bX|
= |\bY| = 1$ for our starting case of $2^4$ vertices).
Finally, we consider two sizes of $|\bR|$: (a) $|\bR| = 0.5n$ and (b)
$|\bR| = 0.25n$. 

Our experimental setup is limited in that we only consider the case
$\bI = \emptyset$ (which is the most natural setting; moreover, at least
for $\textsc{find}$ (Algorithm~\ref{alg:finding}) and \textsc{jtb}
(\cite{jeong2022finding}), this set plays no significant
algorithmic role) and that we only consider Erd\H{o}s-R\'enyi random graphs
(and not other classes of random graphs). However, the setting above
already yields $14 \cdot 3 \cdot 2 \cdot 2 = 168$ different parameter
choices and every further junction at least doubles it. 

We aim to corroborate two empirical claims in this section:
\begin{itemize}
  \item Our algorithms \textsc{find} and \textsc{min} can be
    implemented efficiently and scale to graphs with a large number of variables.
  \item The size of the maximal FD set (returned by \textsc{find} and
    \textsc{jtb}) is often significantly larger compared to the
  minimal FD set returned by (\textsc{min}).
\end{itemize}

As we will see, the run time of \textsc{find} and \textsc{min} is quite robust to
different choices for $|\bX|$, $|\bY|$ and $|\bR|$. While the run time
naturally increases with larger $n$ and $m$, it stays well-below a
second even for the largest instances considered here.

Similar observations can be made with regard to the maximal and
minimal FD sets. In particular, the maximal FD set is usually
extremely large, way larger than necessary to intercept the causal
paths from $\bX$ to $\bY$, and this also holds for all parameter
choices considered in this work.

A few technical details, before the results are discussed. We ran the
experiments on a single core of the AMD
Ryzen Threadripper 3970X 32-core processor on a 256GB
RAM machine. The final run of the experiments took about four days,
but trial runs were done before that (overall computation time was
around one to two weeks). The experiments, however, do not
need the large RAM provided and can be performed on average desktop
computers in a similar timeframe.

We implement our algorithms \textsc{find} and \textsc{min} in Python,
Julia, and JavaScript and
make our code publically available. 
For more clarity, we show only a selection of results on the following
pages, but will offer the remaining plots
online. They can be accessed at
\url{https://github.com/mwien/frontdoor-adjustment}. 

\begin{figure}
  \begin{center}
    \includegraphics[width=\textwidth]{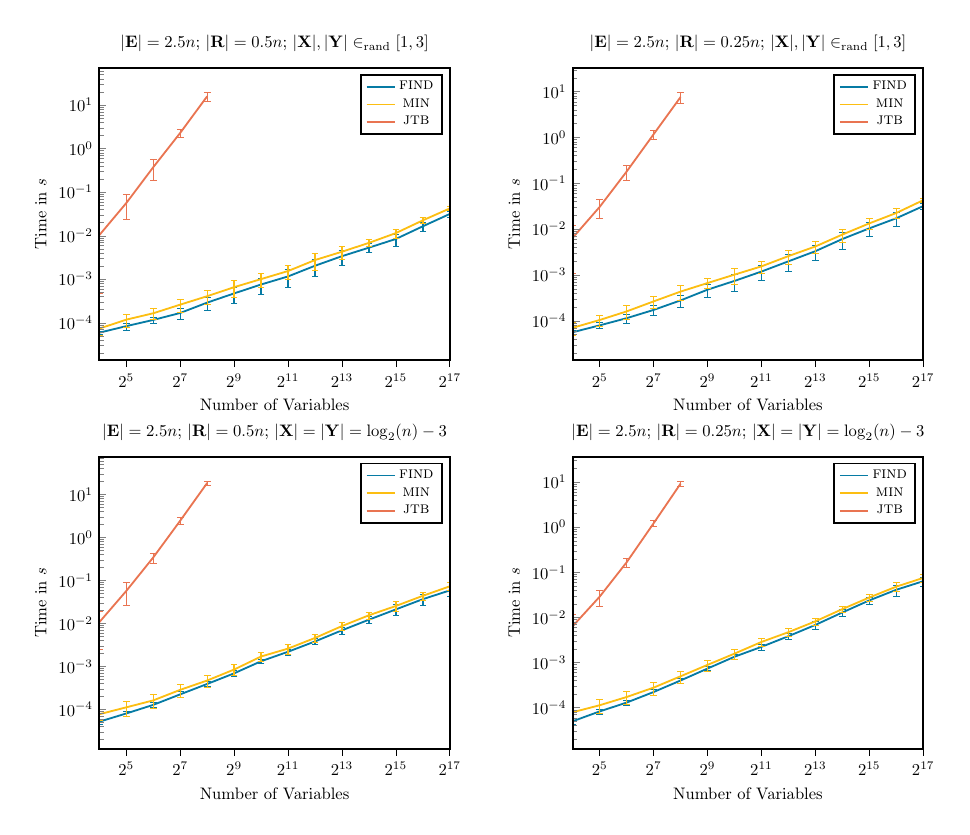}
  \end{center}
  \caption{Run time comparison between \textsc{find}
    (Algorithm~\ref{alg:finding}), \textsc{min}
    (Algorithm~\ref{alg:finding-minimal}) and
  \textsc{jtb} (\cite{jeong2022finding}) for additional parameter
  choices. In
particular, we choose $|\bE| = 2.5n$ (corresponding to expected degree
$5$) and vary the choices of $|\bR|$ and $|\bX|$, $|\bY|$. As can
be seen the run time differences are minor.}
\label{fig:exp:morepython}
\end{figure}

 \begin{figure}
   \begin{center}
     \includegraphics[width=\textwidth]{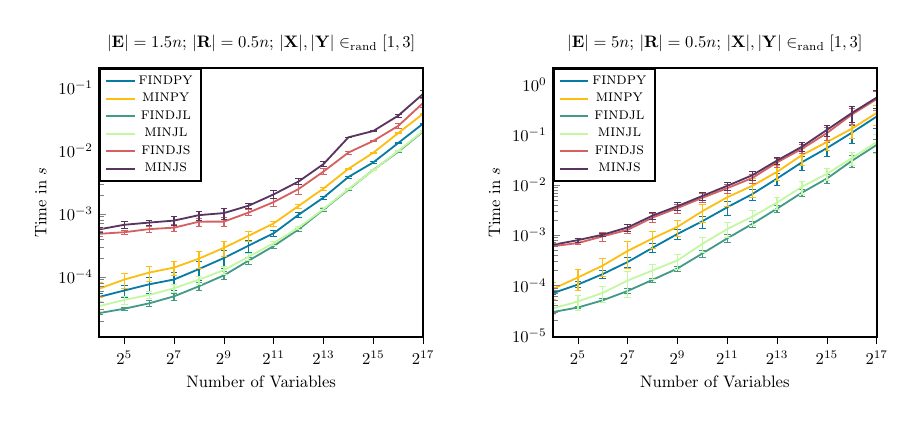}
   \end{center}
  \caption{Run time comparison of implementations of our algorithms
    (\textsc{find} and \textsc{min}) in the programming languages
    Python (\textsc{py}), Julia (\textsc{jl}), and JavaScript
    (\textsc{js}). The parameter choices are the same as
  in Fig.~\ref{fig:experiments} in the main paper. Generally, the Julia implementation is
fastest and Python is second fastest. The differences, however, are
comparably small (usually less than one order of magnitude).} 
\label{fig:exp:crosslang}
\end{figure}

Figure~\ref{fig:exp:morepython} shows additional run time comparisons
between \textsc{find}, \textsc{min}, and \textsc{jtb}. In contrast to
Fig.~\ref{fig:experiments} in the main paper, the number of edges is
chosen as $|\bE| = 2.5n$ and the choices of $|\bX|, |\bY|$, and $|\bR|$
are varied. 

The overall results differ only slightly. Unsurprisingly, the run time
of the algorithms mainly depends on the size of the graph (i.e., $n$
and $m$). For 256 variables, $\textsc{jtb}$ needs roughly 5-20
seconds per instance being barely within the time limit of 30 seconds.
Algorithms \textsc{find} and \textsc{min} only take fractions of a
second, even for the largest considered graphs with over a hundred
thousand vertices.

Figure~\ref{fig:exp:crosslang} shows a comparison over our different
implementations in Python, Julia, and JavaScript for the same parameter
choices as in Fig.~\ref{fig:experiments} in the main paper. The
differences between the languages are usually within one order of
magnitude with Julia being the fastest of the three. Our
implementations are \emph{not} specifically optimized and it would likely be
possible to improve the run time further (e.g. we store all sets
handled during the course of the algorithms in hashtables, raw arrays
could provide further performance benefits). 

Summing up, these experiments demonstrate that our implementations are
extremely efficient and able to handle all (currently) imaginable
graph instances arising in causal inference (and likely even more than
that). In the following, we analyse the properties of the randomly
generated instances in more detail. In particular, with regard to the size of the
found FD sets and the ratio of instances, which can be identified by
FD (and BD) adjustment.

\begin{figure}
  \begin{center}
    \includegraphics[width=\textwidth]{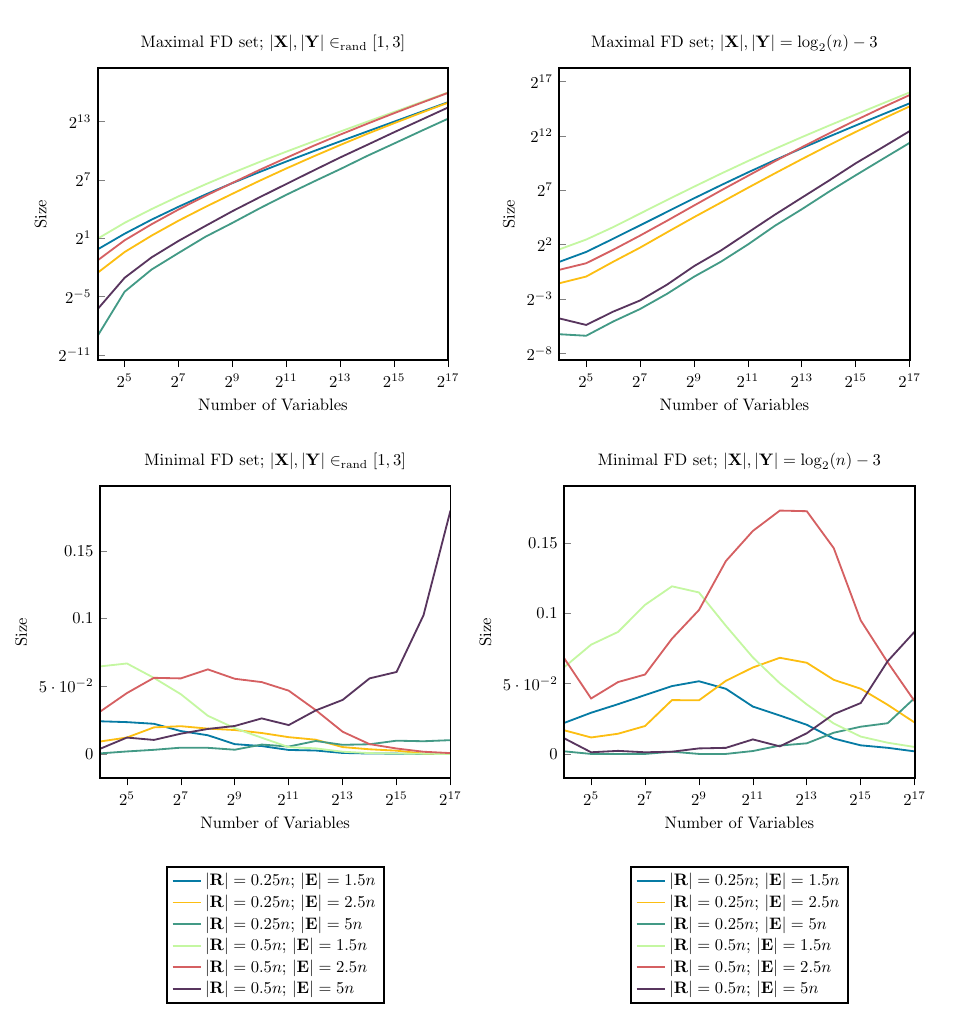}
  \end{center}
  \caption{Average size of the maximal FD set found by \textsc{find}
    and \textsc{jtb} (top) as well as the minimal FD set returned by
    $\textsc{min}$ (bottom). On the left instances for $|\bX|, |\bY|$
    chosen randomly between $1$ and $3$ are shown, on the right those
    with $|\bX| = |\bY| = \log_2(n) -3$. For the case of maximal FD sets,
    log-log plots are necessary to adequately show the results,
    whereas the average size of the minimal FD sets is extremely close
  to zero for many of the parameter choices. The averages are obtained
as the mean of the FD set sizes for all identified instances from
$10^4$ repetitions. Error bars are omitted for the sake of
readability, but standard deviations are reported in the online
supplement. 
}
\label{fig:exp:maxmin}
\end{figure}

Figure~\ref{fig:exp:maxmin} shows the average sizes of the maximal FD
sets, which are returned by \textsc{find} and \textsc{min}, and
the minimal FD set returned by \textsc{min} for $10^4$ randomly
generated instances per parameter choice (results for not FD identifiable instances are discarded). 

The maximal FD set is usually extremely large, growing linearly with
the number of vertices. This is not surprising as this set not
only includes the vertices necessary to block the causal paths between
$\bX$ and $\bY$, but all vertices which can be used for the FD set
(e.g., in the case of an empty graph, this set would still contain all
vertices in $\bR$).

For the same instances, \textsc{min} gives extremely small FD sets, many
of them actually having size zero, which corresponds to the case that
there is no causal path from $\bX$ to $\bY$ (there is bias introduced
to the experiments, by the fact that graphs without such a causal path are always
identified by FD, whereas those \emph{with} such a path are often not).
Generally, in the random graph model, it is hard to generate instance
with large minimal FD sets and this is an interesting topic for future
work. In the context of this work, the result demonstrates the stark contrast between
\textsc{find}, respectively \textsc{jtb}, and \textsc{min}, as the
former algorithm clearly produce unnecessarily large FD sets. 

\begin{figure}
    \begin{center}
      \includegraphics[width=0.9\textwidth]{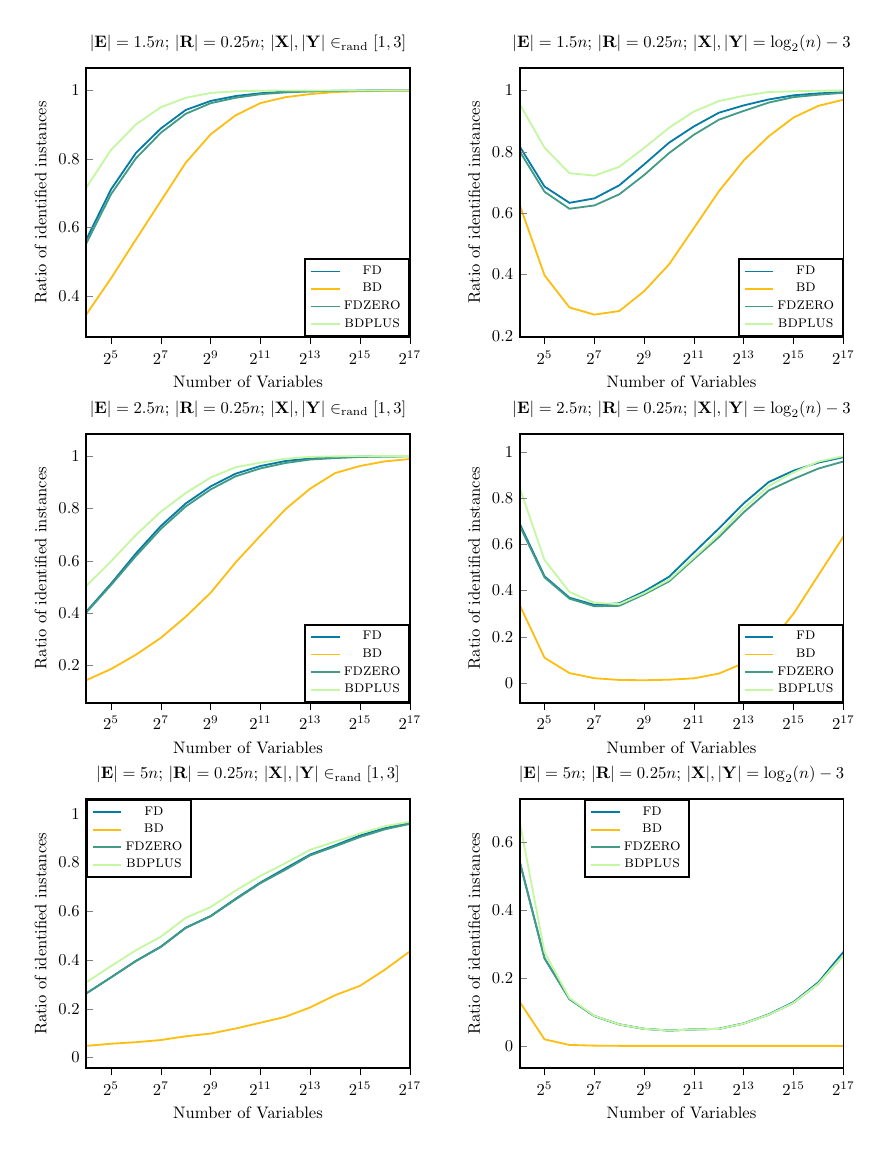}
  \end{center}
  \caption{Ratio of identified instances for $|\bR| = 0.25n$ and the
  same parameter choices as in Fig.~\ref{fig:exp:maxmin}. For
identification, we consider the FD criterion (\textsc{fd}), the
adjustment criterion~\citep{ShpitserVR2010,van2014constructing}
(\textsc{bd}), the FD criterion for $\bZ = \emptyset$
(\textsc{fdzero}) and \textsc{bd} combined with \textsc{fdzero}
(\textsc{bdplus}). The set up (with the addition of FD) is similar to the results presented in
Table 6 of~\cite{van2019separators}. }
  \label{fig:exp:ratios}
  \end{figure}

  Finally, in Fig.~\ref{fig:exp:ratios}, we report the ratio of
  identified instances for the case of $|\bR| = 0.25n$ and the settings
  in Fig.~\ref{fig:exp:maxmin} using (i) the FD criterion, (ii) the
  (BD) adjustment criterion~\citep{ShpitserVR2010,van2014constructing}, here
  denoted by \textsc{bd}, (iii) the FD criterion only allowing FD sets of size
  zero (\textsc{fdzero}) and (iv) the (BD) adjustment criterion
  together with \textsc{fdzero}, which we denote by \textsc{bdplus}.
  This setup (with the addition of the FD criterion) is similar to the
  one studied in~\citep{van2019separators} (Table 6). 

  As indicated already in Fig.~\ref{fig:exp:maxmin}, for Erd\H{o}s-R\'enyi
  random graphs, the difference between \textsc{fd} and
  \textsc{fdzero} is quite small. However, the results nicely show the
  orthogonal nature of FD and BD, as can be seen for example in the plot
  on the top right ($|\bE| = 2.5$ and
  $|\bX|,|\bY|=\log_2(n)-3$), where BD and FD alone 
  identify significantly less instances than \textsc{bdplus}.

\end{document}